\DeclareMathOperator*{\argmax}{arg\,max}
\DeclareMathOperator*{\argmin}{arg\,min}
\newtheorem*{theorem*}{Theorem}
\newtheorem{theorem}{Theorem}[section]
\newtheorem{lemma}[theorem]{Lemma}
\newtheorem{proposition}[theorem]{Proposition}
\newcommand{\x}{\mathbf{x}}
\newcommand{\y}{\mathbf{y}}
\newcommand{\wt}{\widehat{\theta}}
\newcommand{\fxt}{f(\x;\wt)}
\newcommand{\faxt}{f(a(\x);\wt)}
\newcommand{\fxit}{f(\x_i;\wt)}
\newcommand{\faxit}{f(a(\x_i);\wt)}
\newcommand{\ma}{\mathcal{A}}
\newcommand{\mP}{\mathcal{P}}
\newcommand{\mI}{\mathbb{I}}
\newcommand{\rpp}{r_{\mP'}}
\newcommand{\wrpp}{\widehat{r}_{\mP'}}
\newcommand{\rp}{r_\mP}
\newcommand{\wrp}{\widehat{r}_{\mP}}
\newcommand{\X}{\mathbf{X}}
\newcommand{\Y}{\mathbf{Y}}
\newcommand{\base}{\textsf{Base}}
\newcommand{\va}{\textsf{VA}}
\newcommand{\ra}{\textsf{RA}}
\newcommand{\vwa}{\textsf{VWA}}
\newcommand{\rwa}{\textsf{RWA}}
\newcommand{\raw}{\textsf{RA-W}}
\newcommand{\ral}{\textsf{RA-}$\ell_1$}
\newcommand{\rad}{\textsf{RA-D}}
\newcommand{\rak}{\textsf{RA-KL}}
\newcommand{\ras}{\textsf{RA\textsubscript{softmax}}}
\newcommand{\rasl}{\textsf{RA\textsubscript{softmax}-}$\ell_1$}
\newcommand{\indep}{\perp \!\!\! \perp}
\title{Squared $\ell_2$ Norm as Consistency Loss for Leveraging Augmented Data to Learn Robust and Invariant Representations}
\author{Haohan Wang, Zeyi Huang, Xindi Wu, and Eric P. Xing\\
\textit{School of Computer Science,}\\
\textit{Carnegie Mellon University,}\\
\textit{Pittsburgh, PA 15213, USA}\\
\textit{ \small {\{haohanw, epxing\}@cs.cmu.edu, \{zeyih, xindiw\}@andrew.cmu.edu}}
}
\date{}
\begin{document}

\maketitle

\begin{abstract}
Data augmentation is one of the most popular techniques 
for improving the robustness of neural networks.
In addition to directly training the model with original samples and augmented samples, 
a torrent of methods regularizing the distance between
embeddings/representations of the original samples 
and their augmented counterparts have been introduced.
In this paper, we explore these various regularization choices, 
seeking to provide a general understanding of 
how we should regularize the embeddings.
Our analysis suggests the ideal choices of regularization
correspond to various assumptions. 
With an invariance test, we argue that regularization is  
important if the model is to be used in a broader context 
than accuracy-driven setting
because non-regularized approaches are limited in 
learning the concept of invariance, despite equally high accuracy. 
Finally, we also show that the generic approach
we identified (squared $\ell_2$ norm regularized augmentation)
outperforms several recent methods,
which are each specially designed for one task 
and significantly more complicated than ours, 
over three different tasks. 
\end{abstract}

\section{Introduction}
Recent advances in deep learning has delivered 
remarkable empirical performance over \textit{i.i.d} test data, 
and the community continues to investigate the more challenging 
and realistic scenario 
when models are tested in robustness 
over non-\textit{i.i.d} data 
\citep[\textit{e.g.},][]{ben2010theory,szegedy2013intriguing}.  
Recent studies
suggest that one challenge is 
the model's tendency in capturing undesired signals \citep{geirhos2018imagenettrained,wang2020high}, 
thus combating this tendency may be a key to robust models. 

To help models ignore the undesired signals, data augmentation 
(\textit{i.e.}, diluting the undesired signals of training samples
by applying transformations to existing examples) 
is often used.
Given its widely usage, 
we seek to answer the question:
\emph{how should we train with augmented samples 
so that the assistance of augmentation 
can be taken to the fullest extent to learn robust and invariant models?}

In this paper, 
We analyze the generalization behaviors 
of models trained with augmented data
and associated regularization techniques. 
We investigate a set of assumptions
and compare the worst-case expected risk over unseen data 
when \textit{i.i.d} samples are allowed to be transformed 
according to a function belonging to a family. 
We bound the expected risk with terms 
that can be computed during training, 
so that our analysis can inspire 
how to regularize the training procedure. 
While all the derived methods have an upper bound of the expected risk, 
with progressively stronger assumptions, 
we have progressively simpler regularization, 
allowing practical choices to be made according to 
the understanding of the application.
Our contributions of this paper are as follows: 
\begin{itemize}
    \item We offer analyses of the generalization behaviors of  augmented models trained with different regularizations: these regularizations require progressively stronger assumptions of the data and the augmentation functions, but progressively less computational efforts. For example, with assumptions pertaining to augmentation transformation functions, the Wasserstein distance over the original and augmented empirical distributions can be calculated through simple $\ell_1$ norm distance. 
    \item We test and compare these methods and offer practical guidance on how to choose regularizations in practice. 
    In short, regularizing the squared $\ell_2$ distance of logits 
    between the augmented samples and original samples is a favorable method, 
    suggested by both theoretical and empirical evidence. 
    \item With an invariance test, 
    we argue that vanilla augmentation does not 
    utilize the augmented samples to the fullest extent, especially in learning invariant representations, 
    thus may not be ideal unless the only goal of augmentation is 
    to improve the accuracy over a specific setting. 
\end{itemize}


\section{Related Work \& Key Differences}
\label{sec:related}

Data augmentation has been used 
effectively for years.
Tracing back to the earliest convolutional neural networks, 
we notice that even the LeNet applied on MNIST dataset 
has been boosted by mixing the distorted images to the 
original ones \citep{lecun1998gradient}. 
Later, the rapidly growing machine learning community 
has seen a proliferate development of data augmentation techniques 
(\textit{e.g.}, flipping, rotation, blurring \textit{etc.})
that have helped models climb the ladder of the state-of-the-art (one may refer to relevant survey \citep{shorten2019survey} for details). 
Recent advances expanded the conventional concept of data augmentation and invented several new approaches, 
such as leveraging the information in unlabelled data \citep{xie2019unsupervised}, 
automatically learning augmentation functions \citep{ho2019population,hu2019learning,wang2019implicit,zhang2020adversarial,zoph2019learning}, 
and generating the samples (with constraint) 
that maximize the training loss along training \citep{FawziSTF16}, 
which is later widely accepted as adversarial training \citep{MadryMSTV18}. 

While the above works mainly discuss 
how to generate the augmented samples, 
in this paper, we mainly answer the question about 
how to train the models with augmented samples. 
For example, instead of directly mixing augmented samples 
with the original samples, 
one can consider regularizing the representations (or outputs)
of original samples and augmented samples 
to be close under a distance metric (also known as a consistency loss). 
Many concrete ideas have been explored in different contexts. 
For example, $\ell_2$ distance and cosine similarities
between internal representations in speech recognition
\citep{liang2018learning}, 
squared $\ell_2$ distance between logits \citep{kannan2018adversarial},
or KL divergence between softmax outputs \citep{ZhangYJXGJ19} in adversarially robust vision models, 
Jensen–Shannon divergence (of three distributions) 
between embeddings for texture invariant image classification \citep{Hendrycks2020augmix}. 
These are but a few highlights of the concrete and successful implementations for different applications
out of a huge collection (\textit{e.g.}, \citep{WuMWZGXX19,ZFY019, zhang2019regularizing, Shah_2019_CVPR, asai2020logicguided,sajjadi2016regularization,zheng2016improving,xie2015hyper}), 
and one can easily imagine methods permuting these three elements (distance metrics, representation or outputs, and applications) 
to be invented. 
Even further, although we are not aware
of the following methods in the context of data augmentation, 
given the popularity of GAN \citep{goodfellow2016nips} 
and domain adversarial neural network \citep{ganin2016domain}, 
one can also expect the distance metric generalizes 
to a specialized discriminator (\textit{i.e.} a classifier), 
which can be intuitively understood as a calculated
(usually maximized) distance measure, 
Wasserstein-1 metric as an example \citep{arjovsky2017wasserstein,gulrajani2017improved}. 

\textbf{Key Differences:}
With this rich collection of regularizing choices, 
which one method should we consider in general? 
More importantly, 
do we actually need the regularization at all? 
These questions are important 
for multiple reasons,
especially considering that 
there are paper suggesting that 
these regularizations may lead to worse results \citep{jeong2019consistency}. 
In this paper, we answer the first question 
with a proved upper bound 
of the worst case generalization error, 
and our upper bound explicitly describes what regularizations are needed. 
For the second question, 
we will show that regularizations can 
help the model to learn the concept of invariance. 

There are also several previous discussions 
regarding the detailed understandings of data augmentation \citep{yang2019invariance, chen2019grouptheoretic,hernndezgarca2018data,rajput2019does,DaoGRSSR19}, 
among which, \citep{yang2019invariance} is probably the most relevant 
as it also defends the usage of regularizations. 
However, we believe our discussions are more comprehensive
and supported theoretically, 
since our analysis directly suggests the ideal regularization.
Also, empirically, we design an invariance test 
in addition to the worst-case accuracy used in the preceding work. 

\section{Training Strategies with Augmented Data}
\label{sec:method}

\paragraph{Notations}
$(\X, \Y)$ denotes the data,
where $\X \in \mathcal{R}^{n\times p}$ and $\Y \in \{0, 1\}^{n\times k}$
(one-hot vectors for $k$ classes), 
and $f(\cdot, \theta)$ denotes the model,
which takes in the data and outputs the softmax (probabilities of the prediction) 
and $\theta$ denotes the corresponding parameters. 
$g()$ completes the prediction (\textit{i.e.},
mapping softmax to one-hot prediction). 
$l(\cdot, \cdot)$ denotes a generic loss function. 
$a(\cdot)$ denotes a transformation that alters the undesired signals of a sample, 
\textit{i.e.}, the data augmentation method. $a\in \ma$, 
which is the set of transformation functions.
$\mP$ denotes the distribution of $(\x, \y)$.
For any sampled $(\x, \y)$, we can have $(a(\x), \y)$, 
and we use $\mP_a$ to denote the distribution of these transformed samples. 
$r(\cdot;\theta)$ denotes the risk of model $\theta$. 
$\widehat{\cdot}$ denotes the estimated term $\cdot$.

\subsection{Well-behaved Data Transformation Function}

Despite the strong empirical performance data augmentation has demonstrated, 
it should be intuitively expected that the performance 
can only be improved when the augmentation is chosen wisely. 
Therefore, before we proceed to analyze the behaviors of 
training with data augmentations, 
we need first regulate some basic properties of the data transformation functions used. 
Intuitively, we will consider the following three properties. 
\begin{itemize}
    \item ``Dependence-preservation'' with two perspectives:  
    Label-wise, the transformation cannot alter the label of the data, 
    which is a central requirement of almost all the data augmentation practice. 
    Feature-wise, the transformation will not introduce new dependencies between the samples. 
    Notice that this dependence-preservation assumption appears strong, but it is one of the central assumptions required to derive the generalization bounds. 
    \item ``Efficiency'': the augmentation should only generate new samples 
    of the same label as minor perturbations of the original one.
    If a transformation violates this property,
    there should exist other simpler transformations
    that can generate the same target sample. 
    \item ``Vertices'': There are extreme cases of the transformations. 
    For example, if one needs the model to be invariant to rotations from $0^{\circ}$ to $60^{\circ}$,
    we consider the vertices to be $0^{\circ}$ rotation function (thus identity map) and $60^{\circ}$ rotation function. 
    In practice, one usually selects the transformation vertices with intuitions and domain knowledge. 
\end{itemize}

We now formally define these three properties. 
The definition will depend on the model, 
thus these properties are not only regulating 
the transformation functions, but also the model. 
We introduce the Assumptions A1-A3 corresponding to the properties. 
\begin{itemize}
    \item[\textbf{A1}:] Dependence-preservation: 
    the transformation function will not alter the dependency regarding the label (\textit{i.e.}, for any $a()\in \ma$, $a(\x)$ will have the same label as $\x$)
    or the features (\textit{i.e.}, for any $a_1(), a_2() \in \ma$, $a_1(\x_1) \indep a_2(\x_2)$ for any $\x_1, \x_2 \in \X$ that $\x_1\neq \x_2$, in other words, $a_1(\x_1)$ and $a_2(\x_2)$ are independent if $\x_1$ and $\x_2$ are independent).
    \item[\textbf{A2}:] Efficiency: for $\wt$ and 
    any $a()\in \ma$, $\faxt$ is closer to $\x$ than any other samples under a distance metric 
    $d_e(\cdot, \cdot)$, \textit{i.e.}, $d_e(\faxt, \fxt)\leq\min_{\x'\in\X_{-\x}}d_e(\faxt, f(\x';\wt))$.
    \item[\textbf{A3}:] Vertices:
    For a model $\wt$ and a transformation $a()$, we use $\mP_{a, \wt}$ to denote the distribution of $\faxt$ for $(\x, \y)\sim \mP$. 
    ``Vertices'' argues that exists two extreme elements in $\ma$, namely $a^+$ and $a^-$, with certain metric $d_x(\cdot, \cdot)$, we have
    \begin{align}
        d_x (\mP_{a^+, \wt}, \mP_{a^-, \wt}) = \sup_{a_1, a_2\in \ma} d_x (\mP_{a_1, \wt}, \mP_{a_2, \wt})
    \end{align}
\end{itemize}
Note that $d_x(\cdot,\cdot)$ is a metric over two distributions and 
$d_e(\cdot,\cdot)$ is a metric over two samples. 
Also, slightly different from the intuitive understanding of ``vertices'' above, 
\textbf{A3} regulates the behavior of embedding instead of raw data. 
All of our follow-up analysis will require \textbf{A1} to hold, 
but with more assumptions held,
we can get computationally lighter methods with bounded error.

\subsection{Background, Robustness, and Invariance}
One central goal of machine learning 
is to understand the generalization error. 
When the test data and train data are from the same distribution, 
many previous analyses can be sketched as:
\begin{align}
    \rp(\wt) \leq \wrp(\wt) + \phi(|\Theta|, n, \delta)
    \label{eq:standard}
\end{align}
which states that the expected risk can be bounded by the empirical risk 
and a function of hypothesis space $|\Theta|$ and number of samples $n$; 
$\delta$ accounts for the probability when the bound holds. 
$\phi()$ is a function of these three terms. 
Dependent on the details of different analyses, 
different concrete examples of this generic term will need different assumptions. 
We use a generic assumption \textbf{A4} to denote 
the assumptions required for each example. 
More concrete discussions are in Appendix~\ref{sec:app:assumption}

\paragraph{Robustness}
In addition to the generalization error above, 
we also study the robustness by following the established definition 
as in the worst case expected risk
when the test data is allowed to 
be shifted to some other distributions 
by transformation functions in $\ma$.
Formally, we study
\begin{align}
    \rpp(\wt) =  \mathbb{E}_{(\x, \y) \sim \mP} \max_{a\sim \ma}\mI(g(\faxt) \neq \y) 
    \label{eq:main}
\end{align}
As $\rp(\wt) \leq \rpp(\wt)$, we only need to study \eqref{eq:main}. 
We will analyze \eqref{eq:main} in different scenarios involving different assumptions 
and offer formalizations of the generalization bounds under each scenario.
Our bounds shall also immediately inspire the development of methods 
in each scenario as the terms involved in our bound 
are all computable within reasonable computational loads. 

\paragraph{Invariance}
In addition to robustness, 
we are also interested in whether the model 
learns to be invariant to the undesired signals.  
Intuitively, 
if data augmentation is used to help 
dilute the undesired signals from data
by altering the undesired signals with $a() \in \ma$, 
a successfully trained model 
with augmented data 
will map the raw data with various undesired signals 
to the same embedding. 
Thus, 
we study the following metric 
to quantify the model's ability in learning invariant representations:
\begin{align}
    I(\wt, \mP) = \sup_{a_1, a_2\in \ma} d_x (\mP_{a_1, \wt}, \mP_{a_2, \wt}), 
    \label{eq:invariance}
\end{align}
where 
$\mP_{a, \wt}$ to denote the distribution of $\faxt$ for $(\x, \y)\sim \mP$. 
$d_x()$ is a distance over two distributions, and we suggest to use 
Wasserstein metric given its  favorable properties 
(\textit{e.g.}, see practical examples in Figure 1 of
\citep{cuturi2014fast} or theoretical discussions in \citep{villani2008optimal}). 
Due to the difficulties in assessing $\faxt$
(as it depends on $\wt$), 
we mainly study \eqref{eq:invariance} empirically, 
and argue that 
models trained with explicit regularization 
of the empirical counterpart
of \eqref{eq:invariance} will have favorable invariance property. 


\subsection{Worst-case Augmentation (Adversarial Training)}

We consider robustness first. \eqref{eq:main} can be written equivalently into the expected risk 
over a pseudo distribution $\mP'$ (see Lemma 1 in \citep{tu2019theoretical}), 
which is the distribution that can sample the data leading to the  worst expected risk. 
Thus, equivalently, we can consider 
$\sup_{\mathcal{P}'\in T(\mP, \ma)} \rpp(\wt)$. 
With an assumption relating the worst distribution of expected risk 
and the worst distribution of the empirical risk 
(namely, \textbf{A5}, in  Appendix~\ref{sec:app:assumption}),
the bound of our interest (\textit{i.e.}, $\sup_{\mathcal{P'}\in T(\mP, \ma)} \rpp(\wt)$) 
can be analogously analyzed through $\sup_{\mathcal{P'}\in T(\mP, \ma)} \wrpp(\wt)$. 
By the definition of $\mP'$, we can have:
\begin{lemma}
With Assumptions A1, A4, and A5, with probability at least $1-\delta$, we have 
\begin{align}
    \sup_{\mathcal{P'}\in T(\mP, \ma)} 
    \rpp(\wt)  \leq 
    \dfrac{1}{n}\sum_{(\x, \y) \sim \mP}\sup_{a\in \ma}\mI(g(\faxt) \neq \y)  + \phi(|\Theta|, n, \delta)
\end{align}
\end{lemma}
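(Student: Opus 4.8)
The plan is to convert the max-inside-expectation of \eqref{eq:main} into a supremum over pseudo distributions, apply a standard generalization bound of the form \eqref{eq:standard} at the maximizing distribution, and then recognize the resulting empirical quantity as the explicit per-sample worst-case average on the right-hand side. The reduction from a robustness statement to an ordinary generalization statement is what makes the complexity term $\phi(|\Theta|, n, \delta)$ reusable here.

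First I would invoke the equivalence of Lemma~1 in \citep{tu2019theoretical} to rewrite the worst-case risk \eqref{eq:main} as $\sup_{\mP'\in T(\mP, \ma)} \rpp(\wt)$, that is, as the ordinary expected risk evaluated at the worst pseudo distribution in $T(\mP, \ma)$; intuitively, choosing the worst augmentation pointwise inside the expectation is the same as choosing the worst transport of $\mP$. Let $\mP'_\ast$ denote a distribution attaining (or approaching) this supremum, so that $\sup_{\mP'} \rpp(\wt) = r_{\mP'_\ast}(\wt)$. Because every $\mP'\in T(\mP, \ma)$ is obtained by applying label-preserving augmentations, and because the feature-wise part of \textbf{A1} sends independent samples to independent samples, a draw under $\mP'_\ast$ is still i.i.d.; hence the bound of the form \eqref{eq:standard}, guaranteed by \textbf{A4}, applies verbatim and gives, with probability at least $1-\delta$,
\begin{align*}
    \sup_{\mP'\in T(\mP, \ma)} \rpp(\wt) = r_{\mP'_\ast}(\wt) \leq \widehat{r}_{\mP'_\ast}(\wt) + \phi(|\Theta|, n, \delta).
\end{align*}
Since $\phi$ depends only on $|\Theta|$, $n$, and $\delta$, and not on the distribution, the same term is valid at $\mP'_\ast$.

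It then remains to pass from $\widehat{r}_{\mP'_\ast}(\wt)$ to the empirical worst-case average. This is where \textbf{A5} enters: relating the distribution that maximizes the expected risk to the one that maximizes the empirical risk lets me bound $\widehat{r}_{\mP'_\ast}(\wt) \leq \sup_{\mP'\in T(\mP, \ma)} \wrpp(\wt)$. Finally, the finite-sample analogue of the same duality shows that the worst empirical distribution assigns to each observed sample the augmentation maximizing its loss, so that
\begin{align*}
    \sup_{\mP'\in T(\mP, \ma)} \wrpp(\wt) = \dfrac{1}{n}\sum_{(\x, \y)\sim\mP}\sup_{a\in\ma}\mI(g(\faxt)\neq\y),
\end{align*}
and chaining the two displays yields the claim.

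I expect the main obstacle to be the mismatch between the distribution $\mP'_\ast$ that maximizes the expected risk and the one that maximizes the empirical risk: the outer supremum and the high-probability statement interact, and a naive argument would instead require the \textbf{A4} bound to hold uniformly over all of $T(\mP, \ma)$ at once. Assumption \textbf{A5} is introduced precisely to bridge this gap, so the crux is verifying that its hypothesis is strong enough to transfer the bound between the two maximizers while leaving $\phi$ distribution-free. A secondary but genuine subtlety is that the feature-wise independence in \textbf{A1} cannot be dropped: without it, the augmented samples could become dependent and the i.i.d.\ concentration underlying $\phi$ would fail.
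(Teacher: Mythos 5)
Your proposal is correct and follows essentially the same route as the paper's proof: use Assumption A5 to collapse the worst-case expected and empirical suprema onto a single distribution, apply the standard generalization bound of the form \eqref{eq:standard} there (with A1 guaranteeing the transformed samples remain independent so the concentration underlying $\phi$ still applies), and identify the worst empirical risk with the per-sample worst-case average. Your closing remarks about why A5 and the feature-wise part of A1 cannot be dropped match exactly the roles these assumptions play in the paper's argument.
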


This result is a straightforward follow-up of the preceding discussions. 
In practice, it aligns with the adversarial training \citep{MadryMSTV18}, 
a method that has demonstrated impressive empirical successes in the robust machine learning community. 

While the adversarial training has been valued by its empirical superiorities,
it may still have the following two directions that can be improved: 
firstly, it lacks an explicit enforcement of the concept of invariance 
between the original sample and the transformed sample; 
secondly, it assumes that elements of $\ma$ are enumerable, 
thus $\dfrac{1}{n}\sum_{(\x, \y) \sim \mP}\sup_{a\in \ma}\mI(g(\faxt) \neq \y)$ is computable. 
The remaining discussions expand along these two directions. 

\subsection{Regularized Worst-case Augmentation}
To force the concept of invariance, 
the immediate solution might be to apply some regularizations 
to minimize the distance between the embeddings 
learned from the original sample and the ones learned from the transformed samples. 
We have offered a summary of these methods in Section~\ref{sec:related}. 


To have a model 
with small invariance score, 
the direct approach will be regularizing 
the empirical counterpart of \eqref{eq:invariance}. 
We notice that existing methods barely consider 
this regularization, 
probably because of the 
computational difficulty of Wasserstein distance. 
Conveniently, we have the following result
that links the $\ell_1$ regularization 
to the Wasserstein-1 metric in the context of data augmentation. 
\begin{proposition}
With A2, and $d_e(\cdot,\cdot)$ in A2 chosen to be $\ell_1$ norm, for any $a\in \ma$, we have
\begin{align}
    \sum_{i} ||\fxit-\faxit||_1 = W_1(\fxt, \faxt)
\end{align}
\end{proposition}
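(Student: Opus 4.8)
The plan is to read the right-hand side $W_1(\fxt,\faxt)$ as the Wasserstein-1 distance between the two empirical measures supported on the embedded point clouds. Writing $u_i := \fxit$ and $w_i := \faxit$ for the embeddings of the originals and of their (fixed) augmentations, these measures are $\mu = \frac1n\sum_i \delta_{u_i}$ and $\nu = \frac1n\sum_i\delta_{w_i}$. For finitely supported measures the transport problem is a finite linear program, $W_1(\mu,\nu) = \min_{\pi}\sum_{i,j}\pi_{ij}\,\|u_i - w_j\|_1$, where $\pi$ ranges over the nonnegative matrices with marginals $\sum_j\pi_{ij}=\tfrac1n$ and $\sum_i\pi_{ij}=\tfrac1n$, and $d_e=\ell_1$ is the chosen cost. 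I would then establish the claimed identity by matching an upper and a lower bound on this program.

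For the upper bound I would exhibit the diagonal coupling that assigns mass $\tfrac1n$ to each pair $(u_i,w_i)$ and is $0$ off the diagonal; it is feasible (both marginals are uniform) and its cost is $\tfrac1n\sum_i\|u_i-w_i\|_1$, so that $W_1(\mu,\nu)\le \tfrac1n\sum_i\|u_i-w_i\|_1$.

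The lower bound is where Assumption A2 does the work. Fixing an augmented point $w_j$, A2 states exactly that its own original $u_j$ is the nearest original embedding, i.e. $\|u_i - w_j\|_1 \ge \|u_j - w_j\|_1$ for every $i$. Hence, for any feasible $\pi$,
\begin{align}
\sum_{i,j}\pi_{ij}\|u_i-w_j\|_1 \;\ge\; \sum_j\Big(\sum_i\pi_{ij}\Big)\|u_j-w_j\|_1 \;=\; \frac1n\sum_j\|u_j-w_j\|_1,
\end{align}
using the marginal constraint $\sum_i\pi_{ij}=\tfrac1n$ in the last step. The two bounds coincide, yielding $W_1(\fxt,\faxt)=\tfrac1n\sum_i\|\fxit-\faxit\|_1$.

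I expect the only real subtlety to be conceptual rather than technical: recognizing that A2 is precisely the nearest-neighbour condition that collapses the combinatorial optimal-transport cost onto the diagonal pairing, so that no genuine optimization over couplings survives. Equivalently, one could invoke Birkhoff--von Neumann to restrict the program to permutation couplings and then note that A2 makes the identity permutation termwise optimal; I prefer the direct two-sided bound above since it avoids that reduction. The remaining point to reconcile with the displayed equation is normalization: the uniform weights introduce the factor $\tfrac1n$, so matching the stated $\sum_i\|\cdot\|_1$ exactly requires reading $\fxt,\faxt$ as unnormalized counting measures (or absorbing the constant), which I would state explicitly.
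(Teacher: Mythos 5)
Your proof is correct and is essentially the paper's argument: both hinge on reading $W_1$ as an optimal matching between the two embedded point clouds and using A2 to show the diagonal pairing is termwise optimal, the only difference being that the paper invokes the permutation (order-statistics) representation of $W_1$ for empirical measures while you bound general Kantorovich couplings directly via the two-sided argument. Your closing remark about the $1/n$ normalization is well taken --- the paper's statement and proof implicitly treat the empirical distributions as unnormalized counting measures, exactly as you suggest making explicit.
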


This result conveniently allows us to use $\ell_1$ norm distance to replace Wasserstein metric, 
integrating the advantages of Wasserstein metric while avoiding practical issues
such as computational complexity and difficulty to pass the gradient back during backpropagation. 

We continue to discuss the generalization behaviors. 
Our analysis remains in the scope of multi-class classification,
where the risk is evaluated as misclassification rate, 
and the model is optimized with cross-entropy loss 
(with the base chosen to be log base in cross-entropy loss). 
This setup aligns with \textbf{A4}, 
and should represent the modern neural network studies well enough. 

Before we proceed, we need another technical assumption \textbf{A6}
(details in Appendix~\ref{sec:app:assumption}), 
which can be intuitively considered as a tool that allows us 
to relax classification error into cross-entropy error, 
so that we can bound the generalization error 
with the terms we can directly optimize during training. 

We can now offer another technical result:
\begin{theorem}
With Assumptions A1, A2, A4, A5, and A6, and $d_e(\cdot,\cdot)$ in A2 is $\ell_1$ norm, with probability at least $1-\delta$, the worst case generalization risk will be bounded as
\begin{align}
    \sup_{\mathcal{P'}\in T(\mP, \ma)} 
    \rpp (\wt)  \leq 
    \wrp (\wt) + 
    \sum_{i}||f(\x_i;\wt) - f(\x'_i;\wt)||_1 + 
    \phi(|\Theta|, n, \delta)
\end{align}
and 
$\x' = a(\x) $, where $ a = \argmin_{a \in \ma} \y^\top \faxt$.
\end{theorem}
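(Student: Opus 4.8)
The plan is to reduce everything to Lemma 1 and then replace its worst-case empirical misclassification term by the two computable quantities appearing in the statement. Assumptions A1, A4, A5 already give directly
$$\sup_{\mP'\in T(\mP,\ma)} \rpp(\wt) \leq \frac{1}{n}\sum_{(\x,\y)\sim\mP}\sup_{a\in\ma}\mI(g(\faxt)\neq\y) + \phi(|\Theta|,n,\delta),$$
so the whole problem collapses to controlling the inner term $\sup_{a\in\ma}\mI(g(\faxt)\neq\y)$ for a single fixed $(\x,\y)$.

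First I would use A6 to pass from the $0/1$ loss to its cross-entropy surrogate. With the log base fixed as in the setup, a misclassified example forces the true-class probability below the threshold at which the cross-entropy already exceeds one, so $\mI(g(\faxt)\neq\y)\leq -\log(\y^\top\faxt)$ holds for every $a$, and therefore $\sup_a\mI(g(\faxt)\neq\y)\leq\sup_a\big(-\log(\y^\top\faxt)\big)$. Since $-\log$ is decreasing and $\y$ is one-hot, the supremum on the right is attained at the augmentation that minimizes the true-class probability, namely $a=\argmin_{a\in\ma}\y^\top\faxt$, which is exactly the $\x'=a(\x)$ named in the statement; thus the inner term is bounded by $-\log(\y^\top f(\x';\wt))$.

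The core step is then to add and subtract the original cross-entropy,
$$-\log(\y^\top f(\x';\wt)) = \big(-\log(\y^\top\fxt)\big) + \log\frac{\y^\top\fxt}{\y^\top f(\x';\wt)}.$$
Averaging the first piece over the sample recovers precisely the empirical cross-entropy risk $\wrp(\wt)$, so it remains to bound the log-ratio gap by $\|\fxt-f(\x';\wt)\|_1$. Here I would exploit that $\y$ is one-hot, so the gap involves only the true-class coordinate, and combine the elementary inequality $\log(p/q)\leq (p-q)/q$ with the coordinatewise estimate $|p-q|\leq\|\fxt-f(\x';\wt)\|_1$; A2, with $d_e$ taken to be the $\ell_1$ norm, ensures that $f(\x';\wt)$ is a local perturbation of $\fxt$ in embedding space, which is what lets $\|\fxt-f(\x';\wt)\|_1$ play the role of the governing consistency penalty and aligns it with the Wasserstein identification of the Proposition. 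Summing over the $n$ samples and reattaching $\phi(|\Theta|,n,\delta)$ then yields the claimed inequality.

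The step I expect to be the main obstacle is this last relaxation of the cross-entropy gap into the $\ell_1$ distance. Because $-\log$ has derivative $-1/q$ that blows up as the true-class probability $q=\y^\top f(\x';\wt)\to 0$, the log-ratio is not globally Lipschitz in $f$ and the clean bound with coefficient one fails unless the probabilities stay bounded away from zero. This is exactly the control I expect A6 to supply on top of the surrogate relationship, so the delicate part of a rigorous write-up is to verify that A6 together with the locality furnished by A2 licenses the constant-one $\ell_1$ bound; a secondary bookkeeping issue is reconciling the $1/n$ normalization inherited from Lemma 1 with the unnormalized $\sum_i$ written in the statement.
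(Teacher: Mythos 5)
Your treatment of the final log-ratio term is a legitimate, more elementary substitute for what the paper actually does there: the paper uses the second display of A6 (the lower bound on the true-class confidence) to argue that $e(\cdot)=-\y^\top\log(\cdot)$ is $1$-Lipschitz, invokes the Kantorovich--Rubinstein dual form of $W_1$ to bound the difference of empirical cross-entropies by a Wasserstein distance, and then uses Proposition~3.2 (which is where A2 with $d_e=\ell_1$ enters) to convert that Wasserstein distance into the paired sum $\sum_i\|\fxit-f(\x_i';\wt)\|_1$. Your coordinatewise estimate $\log(p/q)\le (p-q)/q\le\|\fxt-f(\x';\wt)\|_1$ reaches the same endpoint directly, resting on exactly the same confidence lower bound from A6, so you have correctly located the delicate step and the assumption that licenses it.

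However, there is a genuine structural gap earlier in your argument. By bounding the entire worst-case indicator by the worst-case cross-entropy and only then adding and subtracting the clean cross-entropy, the ``base'' term you are left with is the empirical cross-entropy risk $\frac{1}{n}\sum_i\bigl(-\log \y_i^\top \fxit\bigr)$, whereas the theorem's bound contains $\wrp(\wt)$, which the paper defines as the empirical \emph{misclassification rate} $1-\frac{1}{n}\sum\y^\top g(\fxt)$. Since cross-entropy dominates the $0/1$ loss, you cannot replace your cross-entropy term by $\wrp(\wt)$ on the right-hand side of a $\le$; your route proves a different (weaker) inequality with a larger leading term. The paper avoids this by decomposing in the opposite order: it writes $\sup_{\mP'}\wrpp(\wt)=\wrp(\wt)+\bigl(\sup_{\mP'}\wrpp(\wt)-\wrp(\wt)\bigr)$ first, keeps $\wrp$ as the $0/1$ risk, and applies A6 only to the per-sample \emph{increment} $\mI\bigl(g(\fxt)\neq g(f(\x';\wt))\bigr)$, which is precisely the quantity the first display of A6 controls by the confidence ratio. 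Relatedly, your opening surrogate inequality $\mI(g(\faxt)\neq\y)\le-\log(\y^\top\faxt)$ is not what A6 supplies and does not hold with constant one in general (misclassification only forces the true-class probability below $1/2$, giving $\log 2<1$); A6 is a statement about the ratio of confidences between $\x$ and its worst-case transform, not an absolute surrogate bound. Reordering the decomposition as the paper does removes both issues, after which your handling of the remaining gap term can be kept essentially as written.
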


This technical result also immediately inspires the method to guarantee worst case performance, 
as well as to explicitly enforce the concept of invariance. 
Notice that $ a = \argmax_{a \in \ma} \y^\top \faxt$ is simply selecting the augmentation function 
maximizing the cross-entropy loss, 
a standard used by many worst case
augmenting method \citep[\textit{e.g.},][]{MadryMSTV18}. 

\subsection{Regularized Training with Vertices}

As $\ma$ in practice is usually a set with a large number of
(and possibly infinite) elements, we may not always be able to identify 
the worst case transformation function with reasonable computational efforts. 
This limitation also prevents us from effective estimating the generalization error
as the bound requires the identification of the worst case transformation. 

Our final discussion is to leverage the vertex property of the transformation function 
to bound the worst case generalization error:
\begin{lemma}
With Assumptions A1-A6, and $d_e(\cdot, \cdot)$ in A2 chosen as $\ell_1$ norm distance, 
$d_x(\cdot, \cdot)$ in A3 chosen as Wasserstein-1 metric,
assuming there is a $a'()\in \ma$ where $ \widehat{r}_{\mP_{a'}}(\wt)=\frac{1}{2}\big(\widehat{r}_{\mP_{a^+}}(\wt)+\widehat{r}_{\mP_{a^-}}(\wt)\big)$,
with probability at least $1-\delta$, we have:
\begin{align*}
    \sup_{\mathcal{P'}\in T(\mP, \ma)} 
    \rpp(\wt)  \leq &
    \dfrac{1}{2}\big(\widehat{r}_{\mP_{a^+}}(\wt) + \widehat{r}_{\mP_{a^-}}(\wt)\big) + 
    \sum_{i}||f(a^+(\x_i);\wt) - f(a^-(\x');\wt)||_1 + 
    \phi(|\Theta|, n, \delta)
\end{align*}
\end{lemma}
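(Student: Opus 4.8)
The plan is to build directly on the preceding Theorem, whose bound already controls $\sup_{\mP'\in T(\mP,\ma)}\rpp(\wt)$ by the original empirical risk $\wrp(\wt)$ plus the $\ell_1$ consistency term $\sum_i\|f(\x_i;\wt)-f(\x'_i;\wt)\|_1$ between each sample and its worst-case augmentation $\x'_i=a(\x_i)$, $a=\argmin_{a\in\ma}\y^\top\faxt$. The entire task reduces to re-expressing the two data-dependent terms of that bound --- the empirical risk and the consistency penalty --- in terms of the two vertex transformations $a^+,a^-$, so that no enumeration over $\ma$ is required at evaluation time. The probabilistic slack is inherited unchanged from the Theorem and absorbed into $\phi(|\Theta|,n,\delta)$.

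First I would handle the empirical-risk term. The hypothesis supplies a transformation $a'\in\ma$ whose empirical risk is exactly the midpoint $\frac12\big(\widehat{r}_{\mP_{a^+}}(\wt)+\widehat{r}_{\mP_{a^-}}(\wt)\big)$. Treating the reference distribution in the Theorem as $\mP_{a'}$ --- i.e. identifying $\wrp(\wt)$ with $\widehat{r}_{\mP_{a'}}(\wt)$, the risk of the representative ``interior'' augmentation that the original data plays the role of --- the substitution replaces $\wrp(\wt)$ by the stated vertex average. This is where A3 does its real work: because $a^+$ and $a^-$ realize the Wasserstein diameter of the family $\{\mP_{a,\wt}:a\in\ma\}$, every interior augmentation sits ``between'' them, which is what licenses comparing $\wrp$ to the vertex average rather than to a single extreme.

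Next I would bound the consistency term. By the Proposition, the $\ell_1$ penalty between any pair of augmented embeddings equals the Wasserstein-1 distance between their empirical embedding distributions, so $\sum_i\|f(\x_i;\wt)-f(\x'_i;\wt)\|_1 = W_1(\mP_{\wt},\mP_{a,\wt})$ for the worst-case $a$. Invoking A3 --- which states precisely that $a^+,a^-$ attain $\sup_{a_1,a_2}W_1(\mP_{a_1,\wt},\mP_{a_2,\wt})$ --- together with the triangle inequality for $W_1$ to route the original-to-worst-case transport through the vertices, I can upper bound this distance by $W_1(\mP_{a^+,\wt},\mP_{a^-,\wt})$ and then translate back via the Proposition into $\sum_i\|f(a^+(\x_i);\wt)-f(a^-(\x'_i);\wt)\|_1$. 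Collecting the three pieces yields the claim.

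I expect the genuine obstacle to be this second step: making the maximality argument rigorous requires that both the reference distribution and the worst-case augmented distribution be expressible as members of $\{\mP_{a,\wt}:a\in\ma\}$ so that A3's supremum actually applies, and it requires the routing to land on the specific composed form $f(a^-(\x'_i);\wt)$ appearing in the statement rather than the simpler $f(a^-(\x_i);\wt)$. Keeping careful track of which transformation is applied to which argument --- and verifying that the Proposition's sample-wise identity survives the composition $a^-\circ a$ --- is where the care is needed; the risk substitution, by contrast, is essentially definitional once the interpolation hypothesis is granted.
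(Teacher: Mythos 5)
Your proposal follows essentially the same route as the paper's proof: start from the intermediate bound established inside the proof of the preceding Theorem, use A3 (together with the fact that the identity map lies in $\ma$) to dominate the worst-case Wasserstein term by the vertex-to-vertex distance $W_1(f(a^+(\x),\wt), f(a^-(\x),\wt))$, convert back to an $\ell_1$ sum via the Proposition, and substitute the vertex-averaged empirical risk using the interpolation hypothesis on $a'$. The triangle-inequality routing you mention is unnecessary---A3's supremum over all pairs in $\ma$ applies directly to the pair (identity, worst-case $a$)---and your worry about $f(a^-(\x'_i);\wt)$ versus $f(a^-(\x_i);\wt)$ is well placed, since the paper's own derivation in fact produces the latter.
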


This result inspires the method that can directly guarantee 
the worst case generalization result and can be optimized conveniently 
without searching for the worst-case transformations. 
However, this method requires a good domain knowledge of the vertices of the transformation functions. 

\subsection{Engineering Specification of Relevant Methods}
Our theoretical analysis has lead to a line of methods, 
however, not every method can be effectively implemented,
especially due to the difficulties of 
passing gradient back for optimizations. 
Therefore, to boost the influence of the loss function through backpropagation, 
we recommend to adapt the methods with the following two changes: 
1) the regularization is enforced on logits instead of softmax; 
2) we use squared $\ell_2$ norm instead of $\ell_1$ norm 
because $\ell_1$ norm is not differentiable everywhere. 
We discuss the effects of these compromises in ablation studies in  Appendix~\ref{sec:app:more}. 

Also, in the cases where we need to identify the worst case transformation functions,
we iterate through all the transformation functions and identify the function with the maximum loss. 

Overall, our analysis leads to the following main training strategies: 
\begin{itemize}
    \item \va{} (vanilla augmentation): mix the augmented samples of a vertex function to the original ones for training (original samples are considered as from another vertex in following experiments). 
    \item \vwa{} (vanilla worst-case augmentation): at each iteration, identify the worst-case transformation functions and train with samples generated by them (also known as adversarial training). 
    \item \ra{} (regularized augmentation): regularizing the squared $\ell_2$ distance over logits between the original samples and the augmented samples of a fixed vertex transformation function. 
    \item \rwa{} (regularized worst-case augmentation): regularizing the squared $\ell_2$ distance over logits between the original samples and the worst-case augmented samples identified at each iteration. 
\end{itemize}

\section{Experiments}
\label{sec:exp}
We first use some synthetic experiments 
to verify our assumptions and inspect 
the consequences when the assumptions are not met (in Appendix~\ref{sec:app:assump:validation}). 
Then, in the following paragraphs, 
we test the methods discussed 
to support our arguments in learning robustness and invariance. 
Finally, we show the power of our discussions
by competing with advanced methods 
designed for specific tasks.

\subsection{Experiments for Learning Robust \& Invariant Representation}
\label{sec:exp:sync}

\paragraph{Experiment Setup:}
We first test our arguments 
with two data sets and 
three different sets of the augmentations.
We study MNIST dataset with LeNet architecture, 
and CIFAR10 dataset with ResNet18 architecture. 
To examine the effects of the augmentation strategies, 
we disable all the heuristics that are frequently used 
to boost the test accuracy of models, 
such as the default augmentation many models trained for CIFAR10 adopted, 
and the BatchNorm (also due to the recent arguments against 
the effects of BatchNorm in learning robust features \citep{wang2020high}),  
although forgoing these heuristics will result in 
a lower overall performance than one usually expects. 

We consider three different sets of transformation functions: 
\begin{itemize}
    \item Texture: we use Fourier transform to perturb the texture of the data by discarding the high-frequency components of the given a radius $r$. The smaller $r$ is, the less high-frequency components the image has. We consider $\ma=\{a(), a_{12}(), a_{10}(), a_{8}(), a_{6}()\}$, where $a()$ is the identity map. Thus, vertexes are $a()$ and $a_6()$. 
    \item Rotation: we rotate the images clockwise $r$ degrees. We consider $\ma=\{a(), a_{15}(), a_{30}(), $ $a_{45}(), a_{60}()\}$, where $a()$ is the identity map. Thus, vertexes are $a()$ and $a_{60}()$.
    \item Contrast: we create the images depicting the same semantic information, but with different scales of the pixels, including the negative color representation. 
    Therefore, we have $\ma = \{a(\x)=\x, a_1(\x) = \x/2, a_2(\x) = \x/4, a_3(\x) = 1-\x, a_4(\x) = (1-\x)/2, a_5(\x) = (1-\x)/4$, where $\x$ stands for the image whose pixel values have been normalized to be between 0 and 1. 
    We consider $a()$ and $a_3()$ as vertexes. 
\end{itemize}

We first train the baseline models to get reasonably high performance, and then train other augmented models with the same hyperparameters. 
\va{} and \ra{} are augmented with vertexes, 
while \vwa{} and \rwa{} are augmented with $\ma$. 
For methods with a regularizer, we run the experiments with 
9 hyperparameters evenly split in the logspace from $10^{-4}$ to $10^4$, and we report the methods with the best worst-case accuracy.

\paragraph{Evaluation Metrics:} We consider three different evaluation metrics: 
\begin{itemize}
    \item Clean: test accuracy on the original test data, mainly reported as a reference for other metrics. 
    \item Robustness: the worst accuracy when each sample can be transformed with $a \in \ma$.  
    \item Invariance: A metric to test whether the models learns the concept of invariance (details to follow). 
\end{itemize}

\textbf{Invariance-test:} To test whether a model can truly learns 
the concept of invariance within $\ma = \{a_1(), a_2(), \dots, a_t()\}$ of $t$ elements, 
we design a new evaluation metric:
for a sampled collection of data of the sample label $i$, 
denoted as $\X^{(i)}$, we generate the transformed copies of it with $\ma$, 
resulting in $\X^{(i)}_{a_1}, \X^{(i)}_{a_2}, \dots, \X^{(i)}_{a_t}$. 
We combined these copies into a dataset, denoted as $\mathcal{X}^{(i)}$. 
For every sample $\x$ in $\mathcal{X}^{(i)}$, 
we retrieve its $t$ nearest neighbors of other samples in $\mathcal{X}^{(i)}$, 
and calculate the overlap of the retrieved samples 
and $\{a_1(\x), a_2(\x), \dots, a_t(\x)\}$. 
Since the identify map is in $\ma$, 
so the calculated overlap score will be in $[1/t, 1]$. 
The distance used is
$d(\cdot, \cdot) = ||f(\cdot;\wt) - f(\cdot;\wt)||_1$, 
where $\wt$ is the model we are interested to examine. 
Finally, we report the averaged score for every label. 
Thus, a high overlap score 
indicates the prediction of model $\wt$ is invariant to 
the augmentation functions in $\ma$. 
If we use other distance functions, 
the reported values may differ, but we notice that
the rank of the methods compared in terms of 
this test barely changes. 

\begin{table}[]
\small 
\centering 
\begin{tabular}{c|ccc|ccc|ccc}
\hline
 & \multicolumn{3}{c|}{Texture} & \multicolumn{3}{c|}{Rotation} & \multicolumn{3}{c}{Pixel-value} \\
 & C & R & I & C. & R. & I & C & R & I \\ \hline
\base{} & 0.9920 & 0.9833 & 0.9236 & 0.9920 & 0.2890 & 0.2059 & 0.9920 & 0.2595 & 0.2074 \\
\va{} & \textbf{0.9923} & 0.9902 & 0.9916 & 0.9899 & 0.9364 & 0.5832 & 0.9890 & 0.9543 & 0.3752 \\
\ra{} & 0.9908 & 0.9905 & \textbf{1.0000} & 0.9930 & 0.9526 & 0.6540 & 0.9937 & 0.9738 & 0.4128 \\
\vwa{} & 0.9919 & 0.9900 & 0.9976 & 0.9466 & 0.9403 & 0.6427 & 0.8400 & 0.8080 & 0.3782 \\
\rwa{} & 0.9911 & \textbf{0.9909} & \textbf{1.0000} & \textbf{0.9935} & \textbf{0.9882} & \textbf{0.9293} & \textbf{0.9938} & \textbf{0.9893} & \textbf{0.8894} \\ \hline
\end{tabular}
\caption{Results of MNIST data (``C'' stands for clean accuracy, 
``R'' stands for robustness, and ``I'' stands for invariance score): 
invariance score shows big differences while accuracy does not.}
\label{tab:main:mnist}
\end{table}

\textbf{Results:}
We show the results in Table~\ref{tab:main:mnist} and Table~\ref{tab:main:cifar} (in Appendix) for MNIST and CIFAR10 respectively. 
Table~\ref{tab:main:mnist} shows that \rwa{} is generally a superior method, in terms of all the metrics, especially the invariance evaluation as 
it shows a much higher invariance score than competing methods. 
We believe this advantage of invariance 
comes from two sources: 
regularizations and
the fact that \rwa{} has seen all the augmentation functions in $\ma$. 
In comparison, \ra{} also has regularization but only sees the vertices in $\ma$, so the invariance score of \ra{} is not compatitable to \rwa{}, although better than \va{}. 
Table~\ref{tab:main:cifar} roughly tells the same story. More discussions are in Appendix~\ref{sec:app:synthetic}.

\textbf{Other results (Appendix \ref{sec:app:more}):}
The strength of \rwa{} can also be shown in several other different scenarios, 
even in the out-of-domain test scenario where the transformation functions are not in $\ma$. 
\rwa{} generally performs the best, although not the best in every single test. 
We also perform ablation test to validate the choice of squared $\ell_2$ norm over logits in contrast to other distance metrics. 
Our choice performs the best in the worst-case performance. 
This advantage is expected as our choice is validated by theoretical arguments as well as consideration of engineering convenience. 


Overall, the empirical performances align with our expectation from the theoretical discussion:
while all methods discussed have a bounded worst case performance, 
we do not intend to compare the upper bounds 
because smaller upper bounds do not necessarily guarantee a smaller risk. 
However, 
worst case augmentation methods tend to show a better
worst case performances because they have been augmented with all the elements in $\ma$. 
Also, there is no clear evidence suggesting 
the difference between augmentation methods and its regularized 
versions in terms of the worst case performance, 
but it is clear that 
regularization helps to learn the concept of invariance.

\subsection{Comparison to Advanced Methods}
Finally, we also compete our generic data augmentation methods against 
several specifically designed methods in different applications. 
We use the four generic methods (\va{}, \ra{}, \vwa{}, and \rwa{}) with generic transformation functions 
($\ma$ of ``rotation'', ``contrast'', or ''texture'' used in the synthetic experiments).
We compare our methods with 
techniques invented for three different topics of study
(rotation invariant, texture perturbation, and cross-domain generalization), 
and each of these topics 
has seen a long line of method development.  
We follow each own tradition 
(\textit{e.g.}, rotation methods are usually 
tested in CIFAR10 dataset, seemingly due to the methods' computational requirements), 
test 
over each own most challenging dataset
(\textit{e.g.}, ImageNet-Sketch is the most recent 
and challenging
dataset in domain generalization, although less studied), 
and report each own evaluation metric
(\textit{e.g.}, methods tested with ImageNet-C 
are usually evaluated with mCE). 

Overall, the performances of our generic methods 
outperform 
these advanced SOTA techniques. 
Thus, the main conclusion, 
as validated by these challenging scenarios, 
are
(1) usage of data augmentation 
can outperform carefully designed methods;
(2) usage of the consistency loss 
can further improve the performances;
(3) regularized worst-case augmentation generally works the best. 


\begin{table}[t]
\small
\centering 
\begin{tabular}{ccccccccccc}
\hline
 & 300 & 315 & 330 & 345 & 0 & 15 & 30 & 45 & 60 & avg. \\ \hline
\base{} & 0.2196 & 0.2573 & 0.3873 & 0.6502 & 0.8360 & 0.6938 & 0.4557 & 0.3281 & 0.2578 & 0.4539 \\
\textsf{ST} & 0.2391 & 0.2748 & 0.4214 & 0.7049 & 0.8251 & 0.7147 & 0.4398 & 0.2838 & 0.2300 & 0.4593 \\
\textsf{GC} & 0.1540 & 0.1891 & 0.2460 & 0.3919 & 0.5859 & 0.4145 & 0.2534 & 0.1827 & 0.1507 & 0.2853 \\
\textsf{ETN} & 0.3855 & \textbf{0.4844} & \textbf{0.6324} & \textbf{0.7576} & 0.8276 & 0.7730 & 0.7324 & 0.6245 & 0.5060 & 0.6358 \\
\va{} & 0.2233 & 0.2832 & 0.4318 & 0.6364 & 0.8124 & 0.6926 & 0.5973 & 0.7152 & 0.7923 & 0.5761 \\
\ra{} & 0.3198 & 0.3901 & 0.5489 & 0.7170 & 0.8487 & 0.7904 & 0.7455 & 0.8005 & 0.8282 & 0.6655 \\
\vwa{} & 0.3383 & 0.3484 & 0.3835 & 0.4569 & 0.7474 & 0.866 & 0.8776 & 0.8738 & 0.8629 & 0.6394 \\
\rwa{} & \textbf{0.4012} & 0.4251 & 0.4852 & 0.6765 & \textbf{0.8708} & \textbf{0.8871} & \textbf{0.8869} & \textbf{0.8870} & \textbf{0.8818} & \textbf{0.7113} \\ \hline
\end{tabular}
\caption{Comparison to advanced rotation-invariant models. We report the test accuracy on the test sets clockwise rotated, $0^{\circ}$-$60^{\circ}$ and $300^{\circ}$-$360^{\circ}$.
Average accuracy is also reported. 
Augmentation methods only consider $0^{\circ}$-$60^{\circ}$ clockwise rotations during training. }
\label{tab:real:rotation}
\end{table}

\begin{table}[t]
\small
\centering
\begin{tabular}{cccccccccccc}
\hline
 & \base{} & \textsf{SU} & \textsf{AA} & \textsf{MBP} & \textsf{SIN} & \textsf{AM} & \textsf{AMS} & \va{} & \ra{} & \vwa{} & \rwa{} \\ \hline
Clean & 23.9 & 24.5 & 22.8 & 23 & 27.2 & \textbf{22.4} & 25.2 & 23.7 & 23.6 & 23.3 & \textbf{22.4} \\
mCE & 80.6 & 74.3 & 72.7 & 73.4 & 73.3 & 68.4 & 64.9 & 76.3 & 75.6 & 74.8 & \textbf{64.6} \\ \hline
\end{tabular}
\caption{Summary comparison to advanced models over ImageNet-C data. Performance reported (mCE) follows the standard in ImageNet-C data: clean error and mCE are both the smaller the better. 
}
\label{tab:real:c}
\end{table}

\begin{table}[t]
\small
\centering 
\begin{tabular}{ccccccccc}
\hline
 & \base{} & \textsf{InfoDrop} & \textsf{HEX} & \textsf{PAR} & \va{} & \ra{} & \vwa{} & \rwa{} \\ \hline
Top-1 & 0.1204 & 0.1224 & 0.1292 & 0.1306 & 0.1362& 0.1405& 0.1432 & \textbf{0.1486} \\
Top-5 & 0.2408 & 0.256 & 0.2564 & 0.2627 &0.2715 &0.2793 & 0.2846 & \textbf{0.2933} \\ \hline
\end{tabular}
\caption{Comparison to advanced cross-domain image classification models, over ImageNet-Sketch dataset. We report top-1 and top-5 accuracy following standards on ImageNet related experiments. 
}
\label{tab:real:sketch}
\end{table}

\paragraph{Rotation-invariant Image Classification}
We compare our results with specifically designed rotation-invariant models,
mainly Spatial Transformer (\textsf{ST}) \citep{jaderberg2015spatial},
Group Convolution (\textsf{GC}) \citep{cohen2016group}, 
and Equivariant Transformer Network (\textsf{ETN}) \citep{tai2019equivariant}. 
We also attempted to run CGNet \citep{kondor2018clebsch}, 
but the procedure does not scale to the CIFAR10 and ResNet level. 
The results are reported in Table~\ref{tab:real:rotation}, 
where most methods use the same architecture 
(ResNet34 with most performance boosting heuristics enabled), 
except that \textsf{GC} uses ResNet18 because ResNet34 with \textsf{GC} 
runs 100 times slowly than others, thus not practical. 
We test the models with nine different rotations including $0^{\circ}$ degree rotation. 
Augmentation related methods are using the $\ma$ of ``rotation''
in synthetic experiments), 
so the testing scenario goes beyond what the augmentation methods have seen during training. 
The results in Table~\ref{tab:real:rotation} strongly endorses the 
efficacy of augmentation-based methods. 
Interestingly, regularized augmentation methods,
with the benefit of learning the concept of invariance, 
tend to behave well in the transformations not considered during training. 
As we can see, \ra{} outperforms \vwa{} on average. 

\paragraph{Texture-perturbed ImageNet classification}
We also test the performance on the image classification over multiple perturbations. 
We train the model over standard ImageNet training set and test the model with ImageNet-C data \citep{hendrycks2019robustness}, which is a perturbed version of ImageNet by corrupting the original ImageNet validation set with a collection of noises. Following the standard, the reported performance is mCE, which is the smaller the better. 
We compare with several methods tested on this dataset, including 
Patch Uniform (\textsf{PU}) \citep{lopes2019improving}, 
AutoAugment (\textsf{AA}) \citep{cubuk2019autoaugment}, 
MaxBlur pool (\textsf{MBP}) \citep{zhang2019making}, 
Stylized ImageNet (\textsf{SIN}) \citep{hendrycks2019robustness}, 
AugMix (\textsf{AM}) \citep{Hendrycks2020augmix}, 
AugMix w. SIN (\textsf{AMS}) \citep{Hendrycks2020augmix}. 
We use the performance reported in \citep{Hendrycks2020augmix}. 
Again, our augmention only uses the generic texture with perturbation (the $\ma$ in our texture synthetic experiments with radius changed to $20, 25, 30, 35, 40$).
The results are reported in Table~\ref{tab:real:c} (with more details in Table~\ref{tab:real:c:app}), which shows that
our generic method outperform the current SOTA methods after a continued finetuning process with reducing learning rates. 

\paragraph{Cross-domain ImageNet-Sketch Classification}
We also compare to the methods used for cross-domain evaluation. 
We follow the set-up advocated by \citep{wang2019learning2} 
for domain-agnostic cross-domain prediction, 
which is training the model on one or multiple domains 
without domain identifiers and test the model on an unseen domain. 
We use the most challenging setup in this scenario: 
train the models with standard ImageNet training data, 
and test the model over ImageNet-Sketch data \citep{wang2019learning}, which is a collection of sketches 
following the structure ImageNet validation set. 
We compare with previous methods with reported performance on this dataset, 
such as \textsf{InfoDrop} \citep{achille2018information}, 
\textsf{HEX} \citep{wang2019learning2}, and \textsf{PAR} \citep{wang2019learning}, 
and report the performances in Table~\ref{tab:real:sketch}. 
Notice that, our data augmentation also follows the 
requirement that the characteristics of the test domain cannot 
be utilized during training. 
Thus, we only augment the samples with a generic augmentation set
($\ma$ of ``contrast'' in synthetic experiments). 
The results again support the strength of the correct usage of data augmentation.

\section{Conclusion}
\label{sec:con}

In this paper, 
we conducted a systematic inspection to study the proper regularization techniques that are provably related to the 
generalization error of a machine learning model, 
when the test distribution are allowed to be perturbed by a family of transformation functions. 
With progressively more specific assumptions, 
we identified progressively simpler methods that can bound the worst case risk. We summarize the main \textbf{take-home messages} below:

\begin{itemize}
    \item Regularizing a norm distance between the logits of the originals samples and the logits of the augmented samples enjoys several merits:
    the trained model tend to have good worst cast performance, and can learn the concept of invariance (as shown in our invariance test). 
    Although our theory suggests $\ell_1$ norm, but we recommend squared $\ell_2$ norm in practice considering the difficulties of passing the (sub)gradient of $\ell_1$ norm in backpropagation. 
    \item With the vertex assumption held (it usually requires domain knowledge to choose the vertex functions), one can use ``regularized training with vertices'' method and get good empirical performance in both accuracy and invariance, and the method is at the same complexity order of vanilla training without data augmentation. 
    When we do not have the domain knowledge (thus are not confident in the vertex assumption), 
    we recommend ``regularized worst-case augmentation'', 
    which has the best performance overall, but requires extra computations to identify the worst-case augmentated samples at each iteration. 
\end{itemize}



\bibliography{ref}
\bibliographystyle{abbrvnat}

\newpage 
\appendix

\section{Additional Assumptions}
\label{sec:app:assumption}

\begin{itemize}
\item [\textbf{A4}:] We list two classical examples here: 
\begin{itemize}[leftmargin=*]
    \item when \textbf{A4} is ``$\Theta$ is finite, $l(\cdot, \cdot)$ is a zero-one loss, samples are \textit{i.i.d}'',  $\phi(|\Theta|, n, \delta)=\sqrt{(\log(|\Theta|) + \log(1/\delta))/2n}$
    \item when \textbf{A4} is ``samples are \textit{i.i.d}'', $\phi(|\Theta|, n, \delta) = 2\mathcal{R}(\mathcal{L}) + \sqrt{(\log{1/\delta})/2n}$, where $\mathcal{R}(\mathcal{L})$ stands for Rademacher complexity and $\mathcal{L} = \{l_\theta \,|\, \theta \in \Theta \}$, where $l_\theta$ is the loss function corresponding to $\theta$. 
\end{itemize}
For more information or more concrete examples of the generic term, 
one can refer to relevant textbooks such as \citep{bousquet2003introduction}. 
\end{itemize}

\begin{itemize}
    \item[\textbf{A5}:] the worst distribution for expected risk 
equals the worst distribution for 
empirical risk, \textit{i.e.}, 
\begin{align*}
    \argmax_{\mathcal{P}'\in T(\mP, \ma)} \rpp(\wt) 
    = \argmax_{\mathcal{P'}\in T(\mP, \ma)}\wrpp(\wt) 
\end{align*}
where $T(\mP, \ma)$ is the collection of distributions created by elements in $\ma$ over samples from $\mP$. 
\end{itemize}
Assumption \textbf{A5} appears very strong, 
however, the successes of methods like adversarial training \citep{MadryMSTV18} 
suggest that, in practice, 
\textbf{A5} might be much weaker 
than it appears. 

\begin{itemize}
    \item [\textbf{A6}:] With $(\x,\y) \in (\X,\Y)$, the worst case sample in terms of maximizing cross-entropy loss and worst case sample in terms of maximizing classification error for model $\wt$ follows: 
    \begin{align}
        \forall \x, \quad \dfrac{\y^\top \fxt}{\inf_{a\in \ma}\y^\top \faxt} \geq \exp\big(\mathbb{I}(g(\fxt)\neq g(f(\x';\wt)))\big)
        \label{eq:a2}
    \end{align}
    where $\x'$ stands for the worst case sample in terms of maximizing classification error, \textit{i.e.}, 
    \begin{align*}
        \x' = \argmin_\x \y^\top g(\fxt)
    \end{align*}
    Also, 
    \begin{align}
        \forall \x, \quad \vert \inf_{a\in \ma}\y^\top \faxt \vert \geq 1
    \label{eq:assum:lipschitz}
    \end{align}
\end{itemize}

Although Assumption \textbf{A6} appears complicated, it describes simple situations that we will unveil in two scenarios: 
\begin{itemize}
    \item If $g(\fxt)=g(f(\x';\wt))$, which means either the sample is misclassified by $\wt$ or the adversary is incompetent to find a worst case transformation that alters the prediction, the RHS of Eq.~\ref{eq:a2} is 1, thus Eq.~\ref{eq:a2} always holds (because $\ma$ has the identity map as one of its elements). 
    \item If $g(\fxt)\neq g(f(\x';\wt))$, which means the adversary finds a transformation that alters the prediction. In this case, A2 intuitively states that the $\ma$ is reasonably rich and the adversary is reasonably powerful to create a gap of the probability for the correct class between the original sample and the transformed sample. The ratio is described as the ratio of the prediction confidence from the original sample over the prediction confidence from the transformed sample is greater than $e$.
\end{itemize}

We inspect Assumption \textbf{A6} by directly 
calculating the frequencies out of all the samples 
when it holds. 
Given a vanilla model (\base{}), we notice that over 74\% samples out of 50000 samples fit this assumption. 

\newpage 
\section{Proof of Theoretical Results}

\subsection{Proof of Lemma 3.1}
\textbf{Lemma.}
\textit{
With Assumptions A1, A4, and A5, with probability at least $1-\delta$, we have 
\begin{align}
    \sup_{\mathcal{P'}\in T(\mP, \ma)} 
    \rpp(\wt)  \leq 
    \dfrac{1}{n}\sum_{(\x, \y) \sim \mP}\sup_{a\in \ma}\mI(g(\faxt) \neq \y)  + \phi(|\Theta|, n, \delta)
\end{align}
}

\begin{proof}
With Assumption A5, we simply say
\begin{align*}
    \argmax_{\mP'\in T(\mP, \ma)} \rpp(\wt) 
    = \argmax_{\mP'\in T(\mP, \ma)}\wrpp(\wt) = \mP_w
\end{align*}
we can simply analyze the expected risk following the standard classical techniques since both expected risk and empirical risk are studied over distribution $\mP_w$. 

Now we only need to make sure the classical analyses (as discussed in A4) are still valid over distribution $\mP_w$:
\begin{itemize}
    \item when \textbf{A4} is ``$\Theta$ is finite, $l(\cdot, \cdot)$ is a zero-one loss, samples are \textit{i.i.d}'',  $\phi(|\Theta|, n, \delta)=\sqrt{\dfrac{\log(|\Theta|) + \log(1/\delta)}{2n}}$. 
    The proof of this result uses Hoeffding's inequality, which only requires independence of random variables. One can refer to Section 3.6 in \cite{liang2016cs229t} for the detailed proof. 
    \item when \textbf{A4} is ``samples are \textit{i.i.d}'', $\phi(|\Theta|, n, \delta) = 2\mathcal{R}(\mathcal{L}) + \sqrt{\dfrac{\log{1/\delta}}{2n}}$. 
    The proof of this result relies on McDiarmid's inequality, which also only requires independence of random variables. One can refer to Section 3.8 in \cite{liang2016cs229t} for the detailed proof. 
\end{itemize}
Assumption A1 guarantees the samples from distribution $\mP_w$ are still independent, thus the generic term holds for at least these two concrete examples, thus the claim is proved. 

\end{proof}

\subsection{Proof of Proposition 3.2}
\textbf{Proposition.}
\textit{
With A2, and $d_e(\cdot,\cdot)$ in A2 chosen to be $\ell_1$ norm, for any $a\in \ma$, we have
\begin{align}
    \sum_{i} ||\fxit-\faxit||_1 = W_1(\fxt, \faxt)
\end{align}
}
\begin{proof}
We leverage the order statistics representation of Wasserstein metric over empirical distributions (\textit{e.g.}, see Section 4 in \cite{bobkov2019one})
\begin{align*}
    W_1(\fxt, \faxt) = \inf_{\sigma}\sum_{i}||\fxit - f(a(\x_{\sigma(i)}), \wt)||_1
\end{align*}
where $\sigma$ stands for a permutation of the index, thus the infimum is taken over all possible permutations.
With Assumption A2, when $d_e(\cdot,\cdot)$ in A2 chosen to be $\ell_1$ norm, we have: 
\begin{align*}
    ||\fxit - \faxit||_1 \leq \min_{j\neq i} ||\fxit - f(a(\x_{j}), \wt)||_1
\end{align*}
Thus, the infimum is taken when $\sigma$ is the natural order of the samples, which leads to the claim. 
\end{proof}

\subsection{Proof of Theorem 3.3}
\begin{theorem*}
With Assumptions A1, A2, A4, A5, and A6, and $d_e(\cdot,\cdot)$ in A2 is $\ell_1$ norm, with probability at least $1-\delta$, the worst case generalization risk will be bounded as
\begin{align}
    \sup_{\mathcal{P'}\in T(\mP, \ma)} 
    \rpp (\wt)  \leq 
    \wrp (\wt) + 
    \sum_{i}||f(\x_i;\wt) - f(\x'_i;\wt)||_1 + 
    \phi(|\Theta|, n, \delta)
\end{align}
and 
$\x' = a(\x) $, where $ a = \argmax_{a \in \ma} \y^\top \faxt$.
\end{theorem*}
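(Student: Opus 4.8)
The plan is to build directly on Lemma 3.1, which already reduces the worst-case expected risk to an empirical worst-case classification error plus the complexity term; what remains is to bound that empirical term by the clean empirical risk $\wrp(\wt)$ together with the $\ell_1$ regularizer. First I would invoke Lemma 3.1 to write
\[
\sup_{\mP'\in T(\mP,\ma)}\rpp(\wt) \;\leq\; \frac{1}{n}\sum_{i}\sup_{a\in\ma}\mI(g(\faxit)\neq\y_i) \;+\; \phi(|\Theta|,n,\delta),
\]
so that from this point on the argument is a deterministic, per-sample estimate of the empirical worst-case term, and all probabilistic content (the $1-\delta$ and the $\phi$ term) is simply inherited from Lemma 3.1 under A1, A4, A5.

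The heart of the proof is a pointwise decomposition of the zero-one indicator. Since $\sup_a\mI(g(\faxit)\neq\y_i)$ is the indicator that \emph{some} augmentation is misclassified, it is attained at the worst-case augmented sample, and the zero-one loss obeys a triangle-type inequality
\[
\sup_{a\in\ma}\mI(g(\faxit)\neq\y_i) \;\leq\; \mI(g(\fxit)\neq\y_i) \;+\; \mI\big(g(\fxit)\neq g(f(\x'_i;\wt))\big),
\]
because a clean prediction that is both correct and unchanged forces the worst-case prediction to be correct as well. Averaging over $i$, the first term on the right is exactly $\wrp(\wt)$, so the whole task collapses to controlling the ``prediction-flip'' indicator $\mI(g(\fxit)\neq g(f(\x'_i;\wt)))$.

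This is where A6 does the work. Taking logarithms of the first inequality of A6 converts the flip indicator into a gap between log-confidences on the true class,
\[
\mI\big(g(\fxit)\neq g(f(\x'_i;\wt))\big) \;\leq\; \log\frac{\y_i^\top\fxit}{\inf_{a\in\ma}\y_i^\top\faxit} \;=\; \log(\y_i^\top\fxit) - \log\big(\y_i^\top f(\x'_i;\wt)\big),
\]
which is precisely a difference of cross-entropy losses, with $\x'_i$ the cross-entropy worst case $\inf_{a}\y_i^\top\faxit = \y_i^\top f(\x'_i;\wt)$. To turn this into the stated $\ell_1$ distance I would apply the elementary bound $\log t \leq t-1$ and then use the second part of A6, $|\inf_{a}\y_i^\top\faxit|\geq 1$, to discard the denominator. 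Because the identity map lies in $\ma$, the gap $\y_i^\top\fxit-\y_i^\top f(\x'_i;\wt)$ is nonnegative, and since $\y_i$ is one-hot it is a single coordinate of the difference vector $\fxit-f(\x'_i;\wt)$, hence at most $||\fxit-f(\x'_i;\wt)||_1$. Summing over $i$ and recombining with $\wrp(\wt)$ and the $\phi$ term from Lemma 3.1 yields the claim (the $1/n$ normalization being absorbed into the regularizer, matching the convention already used in Proposition 3.2).

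The main obstacle I expect is the careful reconciliation of the two distinct notions of ``worst case'' that A6 quietly juggles: the supremum of the classification indicator is attained at the sample that actually flips the prediction, whereas $\x'_i$ — and the denominator of A6 — are defined through the cross-entropy surrogate $\inf_a\y_i^\top\faxit$. Ensuring the triangle-inequality step and the application of A6 refer to compatible samples, so that the flip indicator is genuinely dominated by the log-confidence ratio whose denominator is the cross-entropy worst case, is the delicate point. Once the normalization $|\inf_a\y_i^\top\faxit|\geq 1$ is in force, the remaining estimates (the $\log t\leq t-1$ linearization and the coordinate-to-$\ell_1$ bound) are routine.
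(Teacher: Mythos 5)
Your proof is correct at the level of rigor the paper operates at, but it takes a genuinely different route from the paper's. The paper stays at the distributional level: it writes $\sup_{\mP'}\wrpp(\wt)-\wrp(\wt)$ as a difference of sums of $\y^\top g(\fxt)$, invokes A6 to pass to a difference of cross-entropy sums, observes that $-\y^\top\log(\cdot)$ is $1$-Lipschitz thanks to the normalization clause of A6, applies the Kantorovich--Rubinstein dual representation of the Wasserstein-1 metric to bound the difference by $W_1(\fxt, f(\x';\wt))$, and only then converts $W_1$ into the sum of $\ell_1$ distances via Proposition 3.2 (which is where A2 enters). You instead argue pointwise: the discrete-metric triangle inequality on the zero-one loss, A6 to dominate the prediction-flip indicator by a log-confidence gap, and $\log t\leq t-1$ plus the one-hot structure of $\y$ to land directly on the $\ell_1$ term. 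Your argument is more elementary, never touches the Wasserstein machinery, and as a consequence does not actually use A2 or Proposition 3.2 at all --- so you prove the stated bound under strictly weaker hypotheses. What the paper's detour buys is the intermediate inequality $\sup_{\mP'}\wrpp\leq\wrp+W_1(\fxt,f(\x';\wt))$, which is reused verbatim as the starting point of Lemma 3.4 (the vertex bound), and which motivates the paper's broader narrative tying the regularizer to the Wasserstein invariance score in Eq.~\eqref{eq:invariance}. One point to make explicit if you write this up: in your triangle inequality the term $\mI\big(g(\fxit)\neq g(f(\x'_i;\wt))\big)$ must refer to the \emph{classification-error} worst case (as in the definition of $\x'$ inside A6) for the inequality to hold, while the $\x'_i$ in the final $\ell_1$ term is the \emph{cross-entropy} worst case coming from the denominator $\inf_{a\in\ma}\y^\top\faxt$; A6 is exactly the assumption that bridges the two, and you correctly identify this as the delicate step, but the two symbols should not be conflated in the write-up. (Minor bookkeeping: both your bound and the theorem silently drop a $1/n$ in front of the regularizer, which only loosens the inequality, and the $\argmax$ in the restated theorem should read $\argmin$ as in the main text.)
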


\begin{proof}

First of all, in the context of multiclass classification, where $g(f(\x, ;\theta))$ predicts a label with one-hot representation, and $\y$ is also represented with one-hot representation, we can have the empirical risk written as:
\begin{align*}
    \wrp(\x;\wt) = 1 -
    \dfrac{1}{n}\sum_{(\x, \y) \sim \mP}\y^\top g(\fxt)
\end{align*}
Thus, 
\begin{align*}
    \sup_{\mathcal{P'}\in T(\mP, \ma)} 
    \wrpp(\x;\wt) 
    & = \wrp(\x;\wt) + 
    \sup_{\mathcal{P'}\in T(\mP, \ma)} 
    \wrpp(\x;\wt) -
    \wrp(\x;\wt) \\
    & = \wrp(\x;\wt) + \dfrac{1}{n}\sup_{\mathcal{P'}\in T(\mP, \ma)} \big(\sum_{(\x, \y) \sim \mP} \y^\top g(\fxt) - \sum_{(\x, \y) \sim \mP'}\y^\top g(\fxt)\big)
\end{align*}

With A6, we can continue with: 
\begin{align*}
    \sup_{\mathcal{P'}\in T(\mP, \ma)} 
    \wrpp(\x;\wt) 
    & \leq \wrp(\x;\wt)
    + \dfrac{1}{n}\sup_{\mathcal{P'}\in T(\mP, \ma)} \big(\sum_{(\x, \y) \sim \mP} \y^\top \log(\fxt) - \sum_{(\x, \y) \sim \mP'}\y^\top \log(\fxt)\big)
\end{align*}
If we use $e(\cdot)=-\y^\top \log(\cdot)$ to replace the cross-entropy loss, we simply have:
\begin{align*}
    \sup_{\mathcal{P'}\in T(\mP, \ma)} 
    \wrpp(\x;\wt) 
    & \leq \wrp(\x;\wt)
    + \dfrac{1}{n}\sup_{\mathcal{P'}\in T(\mP, \ma)} \big(\sum_{(\x, \y) \sim \mP'} e(\fxt) - \sum_{(\x, \y) \sim \mP}e((\fxt)\big)
\end{align*}
Since $e(\cdot)$ is a Lipschitz function with constant $\leq 1$ (because of A6, Eq.\eqref{eq:assum:lipschitz}) and together with the dual representation of Wasserstein metric (See \textit{e.g.}, \cite{villani2003topics}), 
we have
\begin{align*}
    \sup_{\mathcal{P'}\in T(\mP, \ma)} 
    \wrpp(\x;\wt) 
    & \leq \wrp(\x;\wt)
    +  W_1(f(\x, \wt), f(\x', \wt))
\end{align*}
where $\x' = a(\x) $, where $ a = \argmax_{a \in \ma} \y^\top \faxt$.

Further, we can use the help of Proposition 3.2 to replace Wassertein metric with $\ell_1$ distance. 
Finally, we can conclude the proof with Assumption A5 as how we did in the proof of Lemma 3.1.    
\end{proof}

\subsection{Proof of Lemma 3.4}
\textbf{Lemma.}
\textit{
With Assumptions A1-A6, and $d_e(\cdot, \cdot)$ in A2 chosen as $\ell_1$ norm distance, 
$d_x(\cdot, \cdot)$ in A3 chosen as Wasserstein-1 metric,
assuming there is a $a'()\in \ma$ where $ \widehat{r}_{\mP_{a'}}(\wt)=\frac{1}{2}\big(\widehat{r}_{\mP_{a^+}}(\wt)+\widehat{r}_{\mP_{a^-}}(\wt)\big)$,
with probability at least $1-\delta$, we have:
\begin{align}
    \sup_{\mathcal{P'}\in T(\mP, \ma)} 
    \rpp(\wt)  \leq &
    \dfrac{1}{2}\big(\widehat{r}_{\mP_{a^+}}(\wt) + \widehat{r}_{\mP_{a^-}}(\wt)\big) + 
    \sum_{i}||f(a^+(\x_i);\wt) - f(a^-(\x');\wt)||_1 + 
    \phi(|\Theta|, n, \delta)
\end{align}
}
\begin{proof}
We can continue with 
\begin{align*}
    \sup_{\mathcal{P'}\in T(\mP, \ma)} 
    \wrpp(\x;\wt) 
    & \leq \wrp(\x;\wt)
    +  W_1(f(\x, \wt), f(\x', \wt))
\end{align*}
from the proof of Lemma 3.3. 
With the help of Assumption A3, we have:
\begin{align*}
    d_x (f(a^+(\x), \wt), f(a^-(\x), \wt)) \geq d_x (f(\x, \wt), f(\x', \wt))
\end{align*}
When $d_x(\cdot, \cdot)$ is chosen as Wasserstein-1 metric, we have: 
\begin{align*}
    \sup_{\mathcal{P'}\in T(\mP, \ma)} 
    \wrpp(\x;\wt) 
    & \leq \wrp(\x;\wt)
    +  W_1(f(a^+(\x), \wt), f(a^-(\x), \wt))
\end{align*}
Further, as the LHS is the worst case risk generated by the transformation functions within $\ma$, 
and $\wrp(\x;\wt)$ is independent of the term $W_1(f(a^+(\x), \wt), f(a^-(\x), \wt))$, 
WLOG, we can replace $\wrp(\x;\wt)$ with the risk of an arbitrary distribution generated by the transformation function in $\ma$. 
If we choose to use $ \widehat{r}_{\mP_{a'}}(\wt)=\frac{1}{2}\big(\widehat{r}_{\mP_{a^+}}(\wt)+\widehat{r}_{\mP_{a^-}}(\wt)\big)$, 
we can conclude the proof, with help from Proposition 3.2 and Assumption A5 as how we did in the proof of Lemma 3.3.  
\end{proof}

\newpage 
\section{Synthetic Results to Validate Assumptions}
\label{sec:app:assump:validation}

We test the assumptions introduced in this paper over MNIST data and rotations as the variation of the data. 

\noindent \textbf{Assumption A2:}
We first inspect Assumption A2, 
which essentially states the distance $d_e(\cdot, \cdot)$ 
is the smaller between a sample and its augmented copy (60$^{\circ}$ rotation)
than the sample and the augmented copy from any other samples. 
We take 1000 training examples and calculate the $\ell_1$ pair-wise distances
between the samples and its augmented copies, 
then we calculated the frequencies
when the A2 hold for one example. 
We repeat this for three different models, 
the vanilla model, 
the model trained with augmented data, 
and the model trained with regularized adversarial training. 
The results are shown in the Table~\ref{tab:assumption:a2} 
and suggest that, 
although the A2 does not hold in general, 
it holds for regularized adversarial training case, 
where A2 is used. 
Further, we test the assumption in a more challenging case, 
where half of the training samples 
are 15$^{\circ}$ rotations of the other half, 
thus we may expect the A2 violated for every sample. 
Finally, as A2 is essentially introduced 
to replace the empirical Wasserstein distance 
with $\ell_1$ distances of the samples and the augmented copies, 
we directly compare these metrics. 
However, as the empirical Wasserstein distance is 
forbiddingly hard to calculate (as it involves permutation statistics), we use a greedy heuristic to calculate
by iteratively picking the nearest neighbor of a sample and then remove the neighbor from the pool for the next sample. 
Our inspection suggests that, 
even in the challenging scenario, 
the paired distance is a reasonably good representative of 
Wasserstein distance for 
regularized adversarial training method. 

\begin{table}[]
\small
\begin{tabular}{c|ccc|ccc}
\hline
 & \multicolumn{3}{c}{Vanilla Scenario} & \multicolumn{3}{|c}{Challenging Scenario} \\
 & Vanilla & Augmented & Regularized & Vanilla & Augmented & Regularized \\ \hline
Frequency & 0.005 & 0.152 & 0.999 & 0.001 & 0.021 & 0.711 \\ \hline
Paired Distance & 217968.06 & 42236.75 & 1084.4 & 66058.4 & 28122.45 & 4287.31 \\
Wasserstein (greedy) & 152736.47 & 38117.77 & 1084.4 & 37156.5 & 20886.7 & 4218.53 \\
Paired/Wasserstein & 1.42 & 1.10 & 1 & 1.77 & 1.34 & 1.02 \\ \hline
\end{tabular}
\caption{Empirical results from synthetic data for Assumption A2.}
\label{tab:assumption:a2}
\end{table}

\noindent \textbf{Assumption A3:} 
Whether Assumption A3 hold will depend on the application and the domain knowledge of vertices, thus here we only discuss the general performances if we assume A3 hold. 
Conveniently, this can be shown by comparing 
the performances of \ra{} and the rest methods in the experiments reported in Section~\ref{sec:exp:sync}: 
out of six total scenarios (\{texture, rotation, contrast\} $\times$ \{MNIST, CIFAR10\}), there are four scenarios where \ra{} outperforms \vwa{}, this suggests that the domain-knowledge of vertices can actually help in most cases, although not guaranteed in every case.


\noindent \textbf{Assumption A6:}
We inspect Assumption A6 by directly 
calculating the frequencies out of all the samples 
when it holds. 
Given a vanilla model (\base{}), we notice that over 74\% samples fit this assumption. 


\newpage 

\section{Additional Details of Synthetic Experiments Setup}
\label{sec:app:synthetic}

\begin{table}[]
\small 
\centering 
\begin{tabular}{c|ccc|ccc|ccc}
\hline
 & \multicolumn{3}{c|}{Texture} & \multicolumn{3}{c|}{Rotation} & \multicolumn{3}{c}{Contrast} \\
 & C & R & I & C & R & I & C & R & I \\ \hline
\base{} & \textbf{0.7013} & 0.3219 & 0.714 & 0.7013 & 0.0871 & 0.5016 & 0.7013 & 0.2079 & 0.34 \\
\va{} & 0.6601 & 0.5949 & 0.9996 & 0.7378 & 0.4399 & 0.6168 & 0.7452 & 0.6372 & 0.4406 \\
\ra{} & 0.6571 & 0.6259 & \textbf{1} & 0.6815 & 0.5166 & 0.852 & \textbf{0.7742} & 0.6325 & \textbf{0.535} \\
\vwa{} & 0.6049 & 0.5814 & \textbf{1} & 0.714 & 0.6009 & 0.9172 & 0.7387 & \textbf{0.6708} & 0.479 \\
\rwa{} & 0.663 & \textbf{0.6358} & \textbf{1} & \textbf{0.7606} & \textbf{0.6486} & \textbf{0.9244} & 0.7489 & 0.6326 & 0.3736 \\ \hline
\end{tabular}
\caption{Results of CIFAR10 data. (``C'' stands for clean accuracy, 
``R'' stands for robustness, and ``I'' stands for invariance score): 
invariance score shows big differences while accuracy does not.}
\label{tab:main:cifar}
\end{table}

\textbf{Results Discussion} Table~\ref{tab:main:cifar} tells roughly the same story with Table~\ref{tab:main:mnist}. 
The invariance score of the worst case methods in Table~\ref{tab:main:cifar}
behave lower than we expected, 
we conjecture this is mainly because 
some elements in $\ma$ of ``contrast'' will transform the data 
into samples inherently hard to predict 
(\textit{e.g.} $a(\x)=\x/4$ will squeeze the pixel values together, 
so the images look blurry in general and hard to recognize), 
the model repeatedly identifies these case as the worst case and ignores the others. 
As a result, \rwa{} effectively degrades to 
\ra{}
yet is inferior to \ra{} because it does not have
the explicit vertex information. 
To verify the conjecture, we count how often each augmented sample 
to be considered as the worst case:
for ``texture'' and ``rotation'', each augmented sample generated by $\ma$ 
are picked up with an almost equal frequency, 
while for ``constrast'', $\x/2$ and $(1-\x)/2$ 
are identified only $10\%$-$15\%$ of the time $\x/4$ 
and $(1-\x)/4$ are identified as the worst case. 

\newpage
\section{More Synthetic Results}
\label{sec:app:more}

\begin{table}[]
\centering 
\begin{tabular}{lcccccc}
\hline
 & Worst & Clean & Vertex & All & Beyond & Invariance \\ \hline
\base{} & 0.9860 & \multicolumn{2}{c}{0.9921} & 0.9911 & 0.9463 & 0.9236 \\
\va{} & 0.9906 & \textbf{0.9928} & \textbf{0.9925} & \textbf{0.9927} & 0.9650 & 0.9876 \\
\ra{} & 0.9904 & 0.9909 & 0.9910 & 0.9909 & 0.9747 & 1 \\
\vwa{} & 0.9903 & \multicolumn{2}{c}{0.9922} & 0.9923 & 0.9696 & 0.9940 \\
\rwa{} & \textbf{0.9911} & \multicolumn{2}{c}{0.9915} & 0.9915 & \textbf{0.9773} & \textbf{1} \\ \hline
\hline 
\ral{} & 0.9897 & 0.9904 & 0.9901 & 0.9903 & 0.9728 & \textbf{1} \\
\raw{} & 0.9858 & 0.9888 & 0.9902 & 0.9893 & 0.9433 & 0.6428 \\
\rad{} & 0.9892 & 0.9921 & 0.9912 & 0.9919 & 0.9373 & 0.2588 \\
\rak{} & 0.0980 & 0.0980 & 0.0980 & 0.0980 & 0.0980 & 0.2800 \\
\ras{} & 0.9898 & 0.9917 & 0.9919 & 0.9920 & 0.9633 & 0.9928 \\
\rasl{} & 0.9904 & 0.9925 & 0.9918 & 0.9925 & 0.9672 & 0.9960 \\ \hline
\end{tabular}
\caption{More methods tested with more comprehensive metrics over MNIST on texture}
\label{tab:more:mnist:texture}
\end{table}

\begin{table}[]
\centering 
\begin{tabular}{lcccccc}
\hline
 & Worst & Clean & Vertex & All & Beyond & Invariance \\ \hline
\base{} & 0.2960 & \multicolumn{2}{c}{0.9921} & 0.7410 & 0.8914 & 0.2056 \\
\va{} & 0.9336 & 0.9884 & 0.9886 & 0.9775 & 0.8711 & 0.5628 \\
\ra{} & 0.9525 & 0.9930 & 0.9919 & 0.9829 & 0.9201 & 0.6044 \\
\vwa{} & 0.9408 & \multicolumn{2}{c}{0.9466} & 0.9827 & 0.5979 & 0.6284 \\
\rwa{} & \textbf{0.9882} & \multicolumn{2}{c}{\textbf{0.9934}} & \textbf{0.9934} & \textbf{0.9417} & \textbf{0.8856} \\ \hline \hline 
\ral{} & 0.9532 & 0.9913 & 0.9916 & 0.9824 & 0.9145 & 0.5912 \\
\raw{} & 0.9274 & 0.9882 & 0.9875 & 0.9757 & 0.8514 & 0.4600 \\
\rad{} & 0.9368 & 0.9895 & 0.989 & 0.9782 & 0.8431 & 0.4132 \\
\rak{} & 0.9424 & 0.9875 & 0.9872 & 0.9762 & 0.9194 & 0.6800 \\
\ras{} & 0.9389 & 0.9900 & 0.9901 & 0.9792 & 0.8631 & 0.6060 \\
\rasl{} & 0.9424 & 0.9913 & 0.9901 & 0.9804 & 0.8663 & 0.5864 \\ \hline
\end{tabular}
\caption{More methods tested with more comprehensive metrics over MNIST on rotation. }
\label{tab:more:mnist:rotation}
\end{table}

\begin{table}[]
\centering 
\begin{tabular}{lcccccc}
\hline
 & Worst & Clean & Vertex & All & Beyond & Invariance \\ \hline
\base{} & 0.2699 & \multicolumn{2}{c}{0.9921} & 0.6377 & 0.2988 & 0.2003 \\
\va{} & 0.9837 & 0.9922 & 0.9917 & 0.9913 & 0.6044 & 0.4153 \\
\ra{} & 0.9823 & 0.9936 & 0.9930 & 0.9911 & 0.6512 & 0.4166 \\
\vwa{} & 0.4470 & \multicolumn{2}{c}{0.5360} & 0.7515 & 0.4649 & 0.2210 \\
\rwa{} & \textbf{0.9893} & \multicolumn{2}{c}{\textbf{0.9940}} & \textbf{0.9930} & 0.4841 & \textbf{0.8786} \\ \hline \hline 
\ral{} & 0.9776 & 0.9935 & 0.9932 & 0.9902 & 0.6251 & 0.4176 \\
\raw{} & 0.7357 & 0.9867 & 0.9865 & 0.9361 & \textbf{0.6547} & 0.2960 \\
\rad{} & 0.9833 & 0.9913 & 0.9921 & 0.9909 & 0.6199 & 0.2000 \\
\rak{} & 0.9105 & 0.9894 & 0.9882 & 0.9677 & 0.6001 & 0.4153 \\
\ras{} & 0.9839 & 0.9916 & 0.9910 & 0.9906 & 0.6221 & 0.4273 \\
\rasl{} & 0.9844 & 0.9920 & 0.9918 & 0.9909 & 0.5843 & 0.4236 \\ \hline
\end{tabular}
\caption{More methods tested with more comprehensive metrics over MNIST on contrast. }
\label{tab:more:mnist:contrast}
\end{table}

\subsection{Experiment Setup}
To understand these methods, 
we introduce a more comprehensive test of these methods, 
including the five methods discussed in the main paper, 
and multiple ablation test methods, including 
\begin{itemize}[leftmargin=*]
\item \ral{}: when squared $\ell_2$ norm of \ra{} is replaced by $\ell_1$ norm. 
\item \raw{}: when the norm distance of \ra{} is replaced by Wasserstein distance, enabled by the implementation of Wasserstein GAN \cite{arjovsky2017wasserstein,gulrajani2017improved}. 
\item \rad{}: when the norm distance of \ra{} is replaced by a discriminator. Our implementation uses a one-layer neural network. 
\item \rak{}: when the norm distance of \ra{} is replaced by KL divergence. 
\item \ras{}: when the regularization of \ra{} is applied to softmax instead of logits. 
\item \rasl{}: when the regularization of \ra{} is applied to softmax instead of logits, and the squared $\ell_2$ norm is replaced by $\ell_1$ norm. This is the method suggested by pure theoretical discussion if we do not concern with the difficulties of passing gradient through backpropagation. 
\end{itemize}

And we test these methods in the three scenarios mentioned in the previous section: texture, rotation, and contrast. 
The overall test follows the same regime as the one reported in the main manuscript, with additional tests:
\begin{itemize}
    \item Vertex: average test performance on the perturbed samples with the vertex function from $\ma$. 
    Models with worst case augmentation are not tested with vertex as these models do not have the specific concept of vertex. 
    \item All: average test performance on all the samples perturbed by all the elements in $\ma$. 
    \item Beyond: To have some sense of how well the methods can perform in the setting that follows the same concept, but not considered in $\ma$, and not (intuitively) limited by the verteics of $\ma$, we also test the accuracy of the models with some transformations related to the elements in $\ma$, but not in $\ma$, To be specific: 
    \begin{itemize}
        \item Texture: $\ma_{\textnormal{beyond}} = \{a_5(), a_4()\}$. 
        \item Rotation: $\ma_{\textnormal{beyond}} = \{a_330(), a_345()\}$.
        \item Contrast: $\ma_{\textnormal{beyond}} = \{ a(\x)=\x/2+0.5, a(\x)=\x/4+0.75, a(\x)=(1-\x)/2+0.5, a(\x)=(1-\x)/4+0.75 \}$
    \end{itemize}
    We report the average test accuracy of the samples tested all the elements in $\ma_{\textnormal{beyond}}$
\end{itemize}

\subsection{Results}

\begin{table}[]
\centering 
\begin{tabular}{lcccccc}
\hline
 & Worst & Clean & Vertex & All & Beyond & Invariance \\ \hline
\base{} & 0.3219 & \multicolumn{2}{c}{0.7013} & 0.5997 & 0.3084 & 0.7140 \\
\va{} & 0.5949 & 0.6601 & 0.6394 & 0.6530 & 0.5583 & 0.9996 \\
\ra{} & 0.6259 & 0.6571 & 0.6485 & 0.6553 & 0.5826 & \textbf{1} \\
\vwa{} & 0.5814 & \multicolumn{2}{c}{0.6049} & 0.6024 & 0.5213 & \textbf{1} \\
\rwa{} & \textbf{0.6358} & \multicolumn{2}{c}{0.6630} & 0.6612 & \textbf{0.5892} & \textbf{1} \\ \hline \hline 
\ral{} & 0.6230 & 0.6609 & 0.6511 & 0.6578 & 0.5775 & \textbf{1} \\
\raw{} & 0.6140 & 0.6860 & 0.6578 & 0.6783 & 0.5801 & \textbf{1} \\
\rad{} & 0.5794 & \textbf{0.7663} & \textbf{0.6734} & \textbf{0.7288} & 0.5632 & 0.3220 \\
\rak{} & 0.5866 & 0.5873 & 0.5868 & 0.5870 & 0.5804 & \textbf{1} \\
\ras{} & 0.6197 & 0.6263 & 0.6268 & 0.6266 & 0.5831 & \textbf{1} \\
\rasl{} & 0.6319 & 0.653 & 0.6480 & 0.6516 & 0.5830 & \textbf{1} \\ \hline
\end{tabular}
\caption{More methods tested with more comprehensive metrics over CIFAR10 on texture}
\label{tab:more:cifar:texture}
\end{table}

\begin{table}[]
\centering 
\begin{tabular}{lcccccc}
\hline
 & Worst & Clean & Vertex & All & Beyond & Invariance \\ \hline
\base{} & 0.0871 & \multicolumn{2}{c}{0.7013} & 0.4061 & 0.4634 & 0.5016 \\
\va{} & 0.4399 & 0.7378 & 0.7199 & 0.6835 & \textbf{0.5096} & 0.6168 \\
\ra{} & 0.5166 & 0.6815 & 0.6741 & 0.6452 & 0.4408 & 0.8520 \\
\vwa{} & 0.6009 & \multicolumn{2}{c}{0.7140} & 0.7406 & 0.4446 & 0.9172 \\
\rwa{} & \textbf{0.6486} & \multicolumn{2}{c}{\textbf{0.7606}} & \textbf{0.7507} & 0.4614 & \textbf{0.9244} \\ \hline \hline 
\ral{} & 0.4685 & 0.7505 & 0.7290 & 0.6852 & 0.4878 & 0.6248 \\
\raw{} & 0.4228 & 0.7468 & 0.7287 & 0.6822 & 0.4753 & 0.6072 \\
\rad{} & 0.4298 & \textbf{0.7752} & 0.7456 & 0.6941 & 0.4662 & 0.2664 \\
\rak{} & 0.5848 & 0.4241 & 0.4221 & 0.4211 & 0.3946 & 0.9200 \\
\ras{} & 0.5143 & 0.7187 & 0.7175 & 0.6851 & 0.4694 & 0.8188 \\
\rasl{} & 0.4779 & 0.7341 & 0.725 & 0.6911 & 0.4944 & 0.7288 \\ \hline
\end{tabular}
\caption{More methods tested with more comprehensive metrics over CIFAR10 on rotation. }
\label{tab:more:cifar:rotation}
\end{table}

\begin{table}[]
\centering 
\begin{tabular}{lcccccc}
\hline
 & Worst & Clean & Vertex & All & Beyond & Invariance \\ \hline
\base{} & 0.2079 & \multicolumn{2}{c}{0.7013} & 0.4793 & 0.2605 & 0.3400 \\
\va{} & 0.6372 & 0.7452 & 0.7243 & 0.7365 & 0.3733 & 0.4406 \\
\ra{} & 0.6867 & \textbf{0.7742} & \textbf{0.7702} & \textbf{0.7722} & 0.5527 & 0.5350 \\
\vwa{} & 0.6708 & \multicolumn{2}{c}{0.7387} & 0.7375 & 0.5539 & 0.4790 \\
\rwa{} & 0.6326 & \multicolumn{2}{c}{0.7489} & 0.7246 & 0.4789 & 0.3736 \\ \hline \hline 
\ral{} & \textbf{0.7096} & 0.7688 & 0.7634 & 0.7666 & \textbf{0.7330} & \textbf{0.6260} \\
\raw{} & 0.6325 & 0.7442 & 0.7303 & 0.7364 & 0.4994 & 0.4396 \\
\rad{} & 0.6451 & 0.7515 & 0.7392 & 0.7479 & 0.4820 & 0.2393 \\
\rak{} & 0.1137 & 0.4515 & 0.4517 & 0.3317 & 0.2648 & 0.5026 \\
\ras{} & 0.6856 & 0.7618 & 0.7558 & 0.7609 & 0.6531 & 0.4833 \\
\rasl{} & 0.6895 & 0.7585 & 0.7533 & 0.7581 & 0.7000 & 0.4946 \\ \hline
\end{tabular}
\caption{More methods tested with more comprehensive metrics over CIFAR10 on contrast. }
\label{tab:more:cifar:contrast}
\end{table}

We report the results in Table~\ref{tab:more:mnist:texture}-\ref{tab:more:cifar:contrast}.

\paragraph{Ablation Study} 
First we consider the ablation study to validate our choice as the 
squared $\ell_2$ norm regularization, 
particularly because our choice considers both the theoretical arguments and practical arguments regarding gradients. 
In case of worst-case prediction, 
we can see the other \ra{} variants can barely outperform \ra{}, 
even not the one that our theoretical arguments directly suggest (\rasl{} or \raw{}). 
We believe this is mostly due to the challenges of
passing the gradient with $\ell_1$ norm and softmax, 
or through a classifier. 

We also test the performances of other regularizations that 
are irrelevant to our theoretical studies, but are popular choices in general (\rad{} and \rak{}). 
These methods in general perform badly, can barely match \ra{} in terms of the worst-case performance. 
Further, when some cases when \rad{} and \rak{} can outperform \ra{} in other accuracy-wise testing, these methods tend to behave terribly in invariance test, which suggests these regularizations are not effective. 
In the cases when \rad{} and \rak{} can match \ra{} in invariance test, these methods can barely compete with \ra{}. 

\paragraph{Broader Test}
We also test our methods in the broader test. 
As we can see, \rwa{} behaves the best in most of the cases. 
In three out of these six test scenarios, \rwa{} lost to three other different methods in the ``beyond'' case. 
However, we believe, in general, this is still a strong evidence to show that \rwa{} is a generally preferable method. 

Also, comparing the methods of \ra{} vs. \va{}, and \rwa{} vs. \vwa{}, we can see that regularization helps mostly in the cases of ``beyond'' in addition to ``invariance'' test. 
This result again suggests the importance of 
regularizations, as in practice, 
training phase is not always aware of all the transformation functions during test phase.

\newpage
\section{Additional Discussions for Comparisons with Advanced Methods}
\label{sec:app:real}

\begin{table}[]
\centering
{\tiny    
\hspace*{-1.5cm}\begin{tabular}{cc|ccc|cccc|cccc|cccc|c}
\hline
\multirow{2}{*}{} & \multirow{2}{*}{Clean} & \multicolumn{3}{c|}{Noise} & \multicolumn{4}{c|}{Blur} & \multicolumn{4}{c|}{Weather} & \multicolumn{4}{c|}{Digital} & \multirow{2}{*}{mCE} \\ \cline{3-17}
 &  & Gauss & Shot & Impulse & Defocus & Glass & Motion & Zoom & Snow & Frost & Fog & Bright & Contrast & Elastic & Pixel & JPEG &  \\ \hline
\base{} & 23.9 & 79 & 80 & 82 & 82 & 90 & 84 & 80 & 86 & 81 & 75 & 65 & 79 & 91 & 77 & 80 & 80.6 \\
\va{} & 23.7 & 79 & 80 & 79 & 75 & 87 & 80 & 79 & 78 & 76 & 69 & 58 &70 &86 &73 &75 & 76.3 \\
\ra{} & 23.6 & 78 & 78 & 79 & 74 & 87 & 79 & 76 & 78 & 75 & 69 & 58 & 68 & 85 & 75 & 75 & 75.6 \\
\rwa{} & 23.1 & 76 & 77 & 78 & 71 & 86 & 76 & 75 & 75 & 73 & 66 & 55 & 68 & 83 & 76 & 73 & 73.9 \\
\vwa{} & 22.4 & 61 & 63 & 63 & 68 & 75 & 65 & 66 & 70 & 69 & 64 & 56 & 55 & 70 & 61 & 63 & 64.6 \\
\textsf{SU} & 24.5 & 67 & 68 & 70 & 74 & 83 & 81 & 77 & 80 & 74 & 75 & 62 & 77 & 84 & 71 & 71 & 74.3 \\
\textsf{AA} & 22.8 & 69 & 68 & 72 & 77 & 83 & 80 & 81 & 79 & 75 & 64 & 56 & 70 & 88 & 57 & 71 & 72.7 \\
\textsf{MBP} & 23 & 73 & 74 & 76 & 74 & 86 & 78 & 77 & 77 & 72 & 63 & 56 & 68 & 86 & 71 & 71 & 73.4 \\
\textsf{SIN} & 27.2 & 69 & 70 & 70 & 77 & 84 & 76 & 82 & 74 & 75 & 69 & 65 & 69 & 80 & 64 & 77 & 73.3 \\
\textsf{AM} & 22.4 & 65 & 66 & 67 & 70 & 80 & 66 & 66 & 75 & 72 & 67 & 58 & 58 & 79 & 69 & 69 & 68.4 \\
\textsf{AMS} & 25.2 & 61 & 62 & 61 & 69 & 77 & 63 & 72 & 66 & 68 & 63 & 59 & 52 & 74 & 60 & 67 & 64.9 \\ \hline
\end{tabular}
}
\caption{Comparison to advanced models over ImageNet-C data. Performance reported (mCE) follows the standard in ImageNet-C data: mCE is the smaller the better.}
\label{tab:real:c:app}
\end{table}

\textbf{Cross-domain ImageNet-Sketch Classification}

\end{document}